\documentclass[11pt]{article}
\usepackage[preprint]{acl}
\usepackage{times,latexsym}
\usepackage[T1]{fontenc}
\usepackage[utf8]{inputenc}
\usepackage{microtype,inconsolata,graphicx,amsmath,amsfonts}
\usepackage[ruled,vlined]{algorithm2e}
\usepackage{dsfont,booktabs,bm,multirow,array,xcolor}

\usepackage{amsmath,amssymb,bm}
\usepackage{tikz}
\usetikzlibrary{positioning,arrows.meta,calc,fit,backgrounds,shapes.geometric,shapes.misc,decorations.pathreplacing}

\title{LEAF: Growing Trees Without Branching for Speech-Aware Large Language Model Post-Training}

\author{%
  Argyrios Gerogiannis\thanks{Correspondence:  \texttt{ag91@illinois.edu}.} \And
  Yekaterina Yegorova \\
  \AND
  Mark Hasegawa-Johnson \And
  Venugopal V. Veeravalli \\
  \AND
  \textnormal{University of Illinois, Urbana-Champaign}
}

\newcommand{\best}[1]{\textbf{#1}}

\usepackage[dvipsnames]{xcolor}

\usepackage{soul} 
\usepackage{multirow}

\usepackage{amsthm}

\newtheorem{theorem}{Theorem}
\newtheorem{remark}{Remark}
\newtheorem{assumption}{Assumption}

\begin{document}
\maketitle
\begin{abstract}
State-of-the-art GRPO-style methods for speech-aware large language model post-training suffer from coarse credit assignment, broadcasting the same terminal-reward advantage to every token in a response. This ignores useful structure within rollout batches, where speech-conditioned completions often share prefixes before diverging at important decisions. We propose Low-rank Exploration with Adaptive Forking (LEAF), a retrospective tree-based RL method that recovers this structure without online branching or additional decoding. LEAF samples complete responses, selects high-surprisal boundaries, groups responses by shared prefixes, and assigns span-level advantages using descendant rewards. We theoretically justify LEAF's span-level credit assignment and boundary-selection design. Empirically, LEAF improves over GRPO across speech question answering and speech translation benchmarks under the same rollout and low-rank adaptation budget. Notably, smaller LEAF-trained models outperform current state-of-the-art, full-parameter baselines.
\end{abstract}

\section{Introduction}
\label{sec:intro}

Speech-aware large language models (SALLMs) \cite{arora2025sallmsurvey} enable end-to-end spoken interaction across multiple tasks such as spoken question answering (SQA), automatic speech translation (AST), and speech recognition. As these models improve, post-training becomes essential for aligning generation with task-level objectives not captured by next-token prediction alone. Reinforcement learning (RL) is a natural fit, and recent GRPO-style methods \cite{shao2024deepseekmathpushinglimitsmathematical,yu2025dapo} have shown that reward-driven adaptation can improve open-form speech understanding while avoiding additional training costs by normalizing terminal rewards across multiple completions.

Despite its simplicity, GRPO is limited by coarse credit assignment: it computes
a terminal reward per response and broadcasts an identical scalar advantage to
every token. While efficient, this discards \emph{where} a completion becomes
promising or harmful. The issue is especially pronounced in open-form speech
post-training, where candidate answers often share early segments and diverge only around uncertain semantic decisions, so the bottleneck is not reward design
alone but \emph{rollout organization} and \emph{credit assignment}.

Tree-structured RL offers a finer alternative: by branching from intermediate
states and using descendant outcomes for process-level supervision, it localizes
credit to the decisions that matter. Directly importing online tree expansion
into SALLM post-training is costly however, since speech inputs are long and
multimodal, decoding is batched, and rewards are task-specific. This motivates
our central question: \emph{can we obtain the benefits of structured exploration
and process-aware credit assignment while retaining the simplicity of batched
GRPO-style training for SALLMs?}

We answer with \textbf{L}ow-rank \textbf{E}xploration with \textbf{A}daptive \textbf{F}orking (\textbf{LEAF}), a retroactive RL method for SALLM post-training. LEAF samples the same $K$ complete responses as GRPO, then reorganizes them into a sparse retrospective prefix tree. It selects  a small number of high-surprisal token boundaries, groups completions by exact shared prefixes at those boundaries, estimates prefix values from descendant terminal rewards, and assigns span-level advantages between retained nodes. LEAF therefore keeps GRPO's batched generation pattern while replacing sequence-level credit with structured span-level supervision derived from the rollout batch itself. Finally, LEAF is designed for parameter-efficient speech post-training. We instantiate it with LoRA \cite{hu2022lora}, motivated by recent evidence that low-rank adaptation can approach full-parameter RL post-training performance \citep{schulman2025lora}. This lets us test whether better credit assignment can translate into stronger SALLM adaptation under a lightweight trainable-parameter budget. To our knowledge, LEAF
is the first tree-based RL method for speech post-training, and the first within
retrospective credit assignment to use an uncertainty-guided mechanism for
constructing prefix-level supervision.

\noindent\textbf{Contributions.} We introduce LEAF, a retroactive tree-based RL method for SALLM post-training that extracts process-aware span advantages from the same i.i.d.\ rollout groups used by GRPO. We provide theoretical support for LEAF's design, showing that span-level prefix advantages are valid credit signals, that the fork budget controls a support--resolution tradeoff, and that pure prefix matching is biased toward shallow high-collision prefixes. Empirically, we show that speech-understanding rollouts are highly forkable across tasks and backbones, and that LEAF consistently improves over GRPO under the same rollout and LoRA adaptation budget, with gains in automatic metrics, judge scores and tail-risk behavior.

\section{Related Work}
\label{sec:related}

\textbf{GRPO for SALLMs.}
GRPO \citep{shao2024deepseekmathpushinglimitsmathematical} is a critic-free RL objective that normalizes terminal rewards within a group of sampled completions. Recent work applies GRPO to speech-aware language models for open-form speech understanding, including SQA and AST \citep{elmakies2025speechgrpo}, as well as to automatic speech recognition \citep{shivakumar2025grpoasr} and text-to-speech \citep{liu2025grpotts}. These works show that reward-driven post-training is effective for speech, but retain GRPO's flat credit assignment: one terminal-reward advantage is broadcast to all tokens in a response. LEAF keeps the same group-based rollout interface, but replaces sequence-level credit with span-level credit derived from shared prefixes.

\noindent\textbf{Fine-grained and tree-structured credit.}
Several recent methods seek denser credit than trajectory-level rewards in LLM
RL. VinePPO estimates intermediate values by sampling continuations from partial
states~\citep{kazemnejad2025vineppo}, SPO assigns advantages at the segment
level~\citep{guo2026spo}, and high-entropy analyses suggest that a few uncertain
tokens often drive downstream decisions~\citep{wang2026highentropy}. Tree-based
methods obtain process-level supervision by organizing rollouts into branching
structures~\citep{li2025treepo,yang2025treerpo,ji2026treegrpollms,hou2025treerl}.
LEAF is closest to TreeRL and TEMPO/P2T~\citep{hou2025treerl,tran2025tempo}, which
estimate prefix-level credit from a fixed group of sampled completions, but it
differs in setting, construction, and credit assignment: it performs no
additional decoding, targets speech rather than text LLMs, and forks only at a
sparse set of high-surprisal locations rather than building the full prefix tree
by exact token matching. We formalize in Section~\ref{sec:theory} why such
collision-only matching is problematic, since it biases credit toward shallow,
high-support prefixes that need not carry reward-relevant information.

\section{Method}
\label{sec:method}

LEAF departs from the dominant execution model in tree-structured RL, which expands a search tree iteratively at inference time. Instead, it exploits structure already present in the batch of i.i.d.\ traces collected during training: candidate responses that share prefixes are organized retrospectively into a prefix tree, with no additional rollouts. This tree replaces GRPO's advantage computation. Rather than assigning a single scalar advantage to every token in a response, LEAF derives piecewise-constant advantages over token spans, localizing credit to where completions diverge (Figure~\ref{fig:leaf_overview}).

\subsection{Preliminaries: GRPO}
\label{subsec:grpo-prelim}
Let $x$ be a speech-conditioned input, $\pi_{\mathrm{old}}$ the rollout policy, $\pi_\theta$ the optimized policy, and $\pi_{\mathrm{ref}}$ a fixed reference policy. GRPO samples $K$ responses $y^{(1)},\ldots,y^{(K)}\sim\pi_{\mathrm{old}}(\cdot\mid x)$ and assigns terminal rewards $r_i=r(x,y^{(i)})$. It forms the group-relative advantage $A_i^{\mathrm{GRPO}}=(r_i-\bar r)/\sigma_r$, where $\bar r$ and $\sigma_r$ are the sample mean and standard deviation, and broadcasts this scalar to every generated token in response $i$. For token $a_t^{(i)}$ generated from state $s_t^{(i)}=(x,y_{<t}^{(i)})$, define $\rho_{i,t}(\theta)=\pi_\theta(a_t^{(i)}|s_t^{(i)})/\pi_{\mathrm{old}}(a_t^{(i)}| s_t^{(i)})$. Given token-level advantages $A_{i,t}$, let $\ell_{i,t}(\theta)=\min\{\rho_{i,t}A_{i,t},\operatorname{clip}_{\epsilon}(\rho_{i,t})A_{i,t}\}$. The clipped KL-regularized loss is
\begin{align}
\mathcal{L}(\theta)
=
-\mathbb{E}_{i,t}\!\left[\ell_{i,t}(\theta)\right]
+
\beta\,\mathbb{E}_{i,t}\!\left[\widehat{\mathrm{KL}}_{i,t}\right],
\label{eq:method-loss}
\end{align}
where $\widehat{\mathrm{KL}}_{i,t}$ estimates the token-level divergence from $\pi_{\mathrm{ref}}$ and
$\operatorname{clip}_{\epsilon}(\rho)$ truncates $\rho$ to the range $[1-\epsilon,1+\epsilon]$. GRPO is recovered by setting $A_{i,t}=A_i^{\mathrm{GRPO}}$ for all $t$.

\subsection{Retrospective Tree Credit Assignment}
\label{subsec:leaf-overview}

\begin{figure*}[t]
\centering
\resizebox{\textwidth}{!}{%
\begin{tikzpicture}[
    font=\small,
    >=Latex,
    tok/.style={draw, rounded corners=1.2pt, minimum width=0.35cm, minimum height=0.28cm, inner sep=0pt},
    blueTok/.style={tok, fill=blue!30},
    orangeTok/.style={tok, fill=orange!35},
    greenTok/.style={tok, fill=green!30},
    grayTok/.style={tok, fill=gray!25},
    redTok/.style={tok, draw=red!80, fill=red!12, line width=0.9pt},
    deltaTok/.style={tok, draw=red!70, dashed, fill=red!8, line width=0.9pt},
    budgetTok/.style={tok, draw=red!70, densely dotted, fill=red!8, line width=0.9pt},
    nodeBox/.style={draw, rounded corners=3pt, align=center, inner sep=3pt, font=\scriptsize},
    panelTitle/.style={font=\bfseries},
    note/.style={font=\scriptsize, align=center},
    arrow/.style={-{Latex[length=2mm]}, thick},
    thinArrow/.style={-{Latex[length=1.6mm]}, semithick},
    branchArrow/.style={-{Latex[length=1.7mm]}, semithick},
    spanBrace/.style={decorate, decoration={brace, amplitude=3pt, mirror}, thick},
    advBraceL/.style={decorate, decoration={brace, amplitude=3pt, mirror}, semithick},
    advBraceR/.style={decorate, decoration={brace, amplitude=3pt}, semithick},
    surp/.style={circle, draw=red!80, line width=0.9pt, inner sep=1.0pt, font=\tiny\bfseries, text=red!80},
    deltaSurp/.style={circle, draw=red!70, dashed, line width=0.9pt, inner sep=1.0pt, font=\tiny\bfseries, text=red!70},
    budgetSurp/.style={circle, draw=red!70, densely dotted, line width=0.9pt, inner sep=1.0pt, font=\tiny\bfseries, text=red!70}
]

\node[panelTitle] at (1.85,2.40) {Scan surprisals and select boundaries};
\node[panelTitle] at (16.00,2.7) {Prefix tree, retained nodes, and span credit};

\draw[gray!28, line width=0.35pt] (7.4,2.55) -- (7.4,-7.20);

\begin{scope}[xshift=-15mm]

\foreach \lbl/\y in {1/1.20,5/0.35,2/-0.50,3/-1.35,4/-2.20,6/-3.05} {
    \node[anchor=east] at (-1.15,\y) {$y^{(\lbl)}$};
}

\foreach \x/\tt in {-0.60/0,-0.25/1,0.10/2,0.45/3,0.80/4,1.15/5,1.50/6,1.85/7,2.20/8} {
    \node[font=\tiny, text=gray!78] at (\x,1.92) {\tt};
}
\node[font=\tiny, text=gray!78, anchor=east] at (-0.90,1.92) {$t=$};

\node[note, text=red!75!black] at (1.00,1.62)
{greedy scan in descending surprisal};

\foreach \x in {-0.60,-0.25} \node[grayTok] at (\x,1.20) {};
\foreach \x in {0.10,0.45,0.80,1.15} \node[blueTok] at (\x,1.20) {};
\foreach \x in {1.50,1.85,2.20} \node[blueTok, fill=blue!12] at (\x,1.20) {};

\foreach \x in {-0.60,-0.25} \node[grayTok] at (\x,0.35) {};
\foreach \x in {0.10,0.45,0.80,1.15} \node[orangeTok] at (\x,0.35) {};
\foreach \x in {1.50,1.85,2.20} \node[orangeTok, fill=orange!15] at (\x,0.35) {};

\foreach \x in {-0.60,-0.25} \node[grayTok] at (\x,-0.50) {};
\foreach \x in {0.10,0.45,0.80,1.15} \node[blueTok] at (\x,-0.50) {};
\foreach \x in {1.50,1.85,2.20} \node[blueTok, fill=blue!12] at (\x,-0.50) {};

\foreach \x in {-0.60,-0.25} \node[grayTok] at (\x,-1.35) {};
\foreach \x in {0.10,0.45,0.80,1.15} \node[blueTok] at (\x,-1.35) {};
\foreach \x in {1.50,1.85,2.20} \node[blueTok, fill=blue!12] at (\x,-1.35) {};

\foreach \x in {-0.60,-0.25} \node[grayTok] at (\x,-2.20) {};
\foreach \x in {0.10,0.45,0.80,1.15} \node[orangeTok] at (\x,-2.20) {};
\foreach \x in {1.50,1.85,2.20} \node[orangeTok, fill=orange!15] at (\x,-2.20) {};

\foreach \x in {-0.60,-0.25} \node[grayTok] at (\x,-3.05) {};
\foreach \x in {0.10,0.45,0.80,1.15} \node[greenTok] at (\x,-3.05) {};
\foreach \x in {1.50,1.85,2.20} \node[greenTok, fill=green!12] at (\x,-3.05) {};

\node[redTok]    at (1.50,1.20) {};
\node[deltaTok]  at (1.85,0.35) {};
\node[redTok]    at (0.10,-0.50) {};
\node[budgetTok] at (2.20,-3.05) {};

\node[surp]       at (1.50,1.20) {!};
\node[deltaSurp]  at (1.85,0.35) {$\Delta$};
\node[surp]       at (0.10,-0.50) {!};
\node[budgetSurp] at (2.20,-3.05) {$B$};

\node[note] at (0.80,-3.65)
{sample $K$ completed responses\\ and scan token surprisals};

\node[nodeBox, fill=gray!10, minimum width=4.15cm] (scan0) at (5.70,1.10)
{\textbf{sorted candidates by surprisal}\\
$(2,2),\ (1,6),\ (5,7),\ (6,8)$};

\node[nodeBox, fill=green!8, minimum width=4.15cm] (scan1) at (5.70,0.00)
{\textbf{scan $(2,2)$}: propose $p=2$\\
\textbf{accept} $\Rightarrow \mathcal P=\{2\}$};

\node[nodeBox, fill=green!8, minimum width=4.15cm] (scan2) at (5.70,-1.10)
{\textbf{scan $(1,6)$}: propose $p=6$\\
\textbf{accept} $\Rightarrow \mathcal P=\{2,6\}$};

\node[nodeBox, fill=orange!12, minimum width=4.15cm] (scan3) at (5.70,-2.20)
{\textbf{scan $(5,7)$}: propose $p=7$\\
\textbf{reject by $\Delta$}: $|7-6|<\Delta$};

\node[nodeBox, fill=gray!12, minimum width=4.15cm] (scan4) at (5.70,-3.30)
{\textbf{scan $(6,8)$}: propose $p=8$\\
\textbf{skip}: $|\mathcal P|=B=2$};

\node[nodeBox, fill=gray!5, minimum width=4.15cm] (finalP) at (5.70,-4.40)
{\textbf{selected global boundaries}\\
$\mathcal P=\{2,6\}$};

\draw[arrow] (scan0) -- (scan1);
\draw[arrow] (scan1) -- (scan2);
\draw[arrow] (scan2) -- (scan3);
\draw[arrow] (scan3) -- (scan4);
\draw[arrow] (scan4) -- (finalP);

\draw[arrow, gray!62] (0.33,-0.50) -- (3.55,0.00);
\draw[arrow, gray!62] (1.73,1.20)  -- (3.55,-1.10);
\draw[arrow, gray!62] (2.08,0.35)  -- (3.55,-2.20);
\draw[arrow, gray!62] (2.43,-3.05) -- (3.55,-3.30);

\node[note] at (5.70,-5.25)
{$\Delta$ avoids nearby boundaries\\ $B$ caps how many are retained};

\end{scope}

\begin{scope}

\begin{scope}[xshift=1.0cm]

\node[nodeBox, fill=gray!15, minimum width=1.60cm] (vroot) at (12.65,1.95)
{$v_0$\\root};

\node[draw, line width=0.8pt, rounded corners=3pt, fill=gray!8,
      minimum width=0.78cm, minimum height=1.10cm] (v2strip) at (12.65,0.82) {};
\foreach \y in {1.00,0.64} \node[grayTok] at (12.65,\y) {};
\node[font=\small, anchor=west] at (13.08,0.82) {$v_2$};

\draw[arrow] (vroot.south) -- (v2strip.north);

\node[draw, line width=0.8pt, rounded corners=3pt, fill=blue!10,
      minimum width=0.82cm, minimum height=1.70cm] (vbstrip) at (9.70,-1.05) {};
\foreach \y in {-0.47,-0.82,-1.17,-1.52} \node[blueTok] at (9.70,\y) {};
\node[font=\small, text=blue!55!black, anchor=west] at (10.15,-1.05) {$v_b$};

\node[draw, line width=0.8pt, rounded corners=3pt, fill=orange!12,
      minimum width=0.82cm, minimum height=1.70cm] (vostrip) at (12.65,-1.05) {};
\foreach \y in {-0.47,-0.82,-1.17,-1.52} \node[orangeTok] at (12.65,\y) {};
\node[font=\small, text=orange!60!black, anchor=west] at (13.10,-1.05) {$v_o$};

\node[draw, line width=0.8pt, rounded corners=3pt, fill=green!6,
      minimum width=0.62cm, minimum height=2.60cm] (y6box) at (16.05,-1.50) {};

\foreach \y in {-0.47,-0.82,-1.17,-1.52}
    \node[greenTok] at (16.05,\y) {};

\foreach \y in {-1.85,-2.18,-2.51}
    \node[greenTok, fill=green!12] at (16.05,\y) {};

\node[draw, dashed, line width=0.8pt, rounded corners=3pt, green!45!black,
      minimum width=0.48cm, minimum height=1.50cm] (vgbox) at (16.05,-0.995) {};

\node[font=\small, text=green!45!black, anchor=west] at (16.48,-0.995) {$v_g$};

\draw[branchArrow] ([xshift=-0.20cm]v2strip.south) -- (vbstrip.north);
\draw[branchArrow] (v2strip.south) -- (vostrip.north);
\draw[branchArrow, green!45!black] ([xshift=0.20cm]v2strip.south) -- (y6box.north);

\node[draw, line width=0.8pt, rounded corners=3pt, fill=blue!5,
      minimum width=0.58cm, minimum height=1.35cm] (y1box) at (8.10,-3.45) {};
\foreach \y in {-3.10,-3.45,-3.80}
    \node[blueTok, fill=blue!12] at (8.10,\y) {};
\node[note] at (8.10,-4.42) {$y^{(1)}$};

\node[draw, line width=0.8pt, rounded corners=3pt, fill=blue!5,
      minimum width=0.58cm, minimum height=1.35cm] (y2box) at (9.70,-3.45) {};
\foreach \y in {-3.10,-3.45,-3.80}
    \node[blueTok, fill=blue!12] at (9.70,\y) {};
\node[note] at (9.70,-4.42) {$y^{(2)}$};

\node[draw, line width=0.8pt, rounded corners=3pt, fill=blue!5,
      minimum width=0.58cm, minimum height=1.35cm] (y3box) at (11.35,-3.45) {};
\foreach \y in {-3.10,-3.45,-3.80}
    \node[blueTok, fill=blue!12] at (11.35,\y) {};
\node[note] at (11.35,-4.42) {$y^{(3)}$};

\draw[thinArrow] ([xshift=-0.26cm]vbstrip.south) -- (y1box.north);
\draw[thinArrow] (vbstrip.south) -- (y2box.north);
\draw[thinArrow] ([xshift=0.26cm]vbstrip.south) -- (y3box.north);

\node[draw, line width=0.8pt, rounded corners=3pt, fill=orange!8,
      minimum width=0.58cm, minimum height=1.35cm] (y4box) at (13.10,-3.45) {};
\foreach \y in {-3.10,-3.45,-3.80}
    \node[orangeTok, fill=orange!15] at (13.10,\y) {};
\node[note] at (13.10,-4.42) {$y^{(4)}$};

\node[draw, line width=0.8pt, rounded corners=3pt, fill=orange!8,
      minimum width=0.58cm, minimum height=1.35cm] (y5box) at (14.70,-3.45) {};
\foreach \y in {-3.10,-3.45,-3.80}
    \node[orangeTok, fill=orange!15] at (14.70,\y) {};
\node[note] at (14.70,-4.42) {$y^{(5)}$};

\draw[thinArrow] ([xshift=-0.20cm]vostrip.south) -- (y4box.north);
\draw[thinArrow] ([xshift=0.20cm]vostrip.south) -- (y5box.north);

\node[note] at (16.05,-3.10) {$y^{(6)}$};


\draw[advBraceL]
([xshift=-0.18cm]v2strip.north west) -- ([xshift=-0.18cm]v2strip.south west)
node[midway, xshift=-0.46cm, font=\scriptsize] {$A(v_2)$};

\draw[advBraceL, blue!55!black]
([xshift=-0.18cm]vbstrip.north west) -- ([xshift=-0.18cm]vbstrip.south west)
node[midway, xshift=-0.48cm, font=\scriptsize, text=blue!55!black] {$A(v_b)$};

\draw[advBraceL, orange!60!black]
([xshift=-0.18cm]vostrip.north west) -- ([xshift=-0.18cm]vostrip.south west)
node[midway, xshift=-0.48cm, font=\scriptsize, text=orange!60!black] {$A(v_o)$};

\draw[advBraceL, blue!55!black]
([xshift=-0.16cm]y1box.north west) -- ([xshift=-0.16cm]y1box.south west)
node[midway, xshift=-0.46cm, font=\scriptsize, text=blue!55!black]
{$A_{\mathrm{tail}}^{(1)}$};

\draw[advBraceL, blue!55!black]
([xshift=-0.16cm]y2box.north west) -- ([xshift=-0.16cm]y2box.south west)
node[midway, xshift=-0.46cm, font=\scriptsize, text=blue!55!black]
{$A_{\mathrm{tail}}^{(2)}$};

\draw[advBraceL, blue!55!black]
([xshift=-0.16cm]y3box.north west) -- ([xshift=-0.16cm]y3box.south west)
node[midway, xshift=-0.46cm, font=\scriptsize, text=blue!55!black]
{$A_{\mathrm{tail}}^{(3)}$};

\draw[advBraceL, orange!60!black]
([xshift=-0.16cm]y4box.north west) -- ([xshift=-0.16cm]y4box.south west)
node[midway, xshift=-0.46cm, font=\scriptsize, text=orange!60!black]
{$A_{\mathrm{tail}}^{(4)}$};

\draw[advBraceL, orange!60!black]
([xshift=-0.16cm]y5box.north west) -- ([xshift=-0.16cm]y5box.south west)
node[midway, xshift=-0.46cm, font=\scriptsize, text=orange!60!black]
{$A_{\mathrm{tail}}^{(5)}$};

\draw[advBraceL, green!45!black]
([xshift=-0.18cm]y6box.north west) -- ([xshift=-0.18cm]y6box.south west)
node[midway, xshift=-0.50cm, font=\scriptsize, text=green!45!black]
{$A_{\mathrm{tail}}^{(6)}$};

\end{scope}

\node[nodeBox, fill=gray!15, anchor=west, align=left, minimum width=5.25cm, inner sep=4pt] (leg2) at (18.15,1.55)
{\textbf{$v_2$}\\[-1pt]
$(2,u_2),\ n=6$\\[-1pt]
$u_2=y_{<2}^{(1)}=\cdots=y_{<2}^{(6)}$\\[-1pt]
$\widehat V(v_2)=(r_1+\cdots+r_6)/6$\\[-1pt]
$\scriptstyle A(v_2)=
\frac{2(\widehat V(v_2)-\widehat V(v_0))}{\sqrt{6}}$};

\node[nodeBox, fill=blue!22, anchor=west, align=left, minimum width=5.25cm, inner sep=4pt] (legb) at (18.15,-0.30)
{\textbf{$v_b$}\\[-1pt]
$(6,u_b),\ n=3$\\[-1pt]
$y_{<6}^{(1)}=y_{<6}^{(2)}=y_{<6}^{(3)}$\\[-1pt]
$\widehat V(v_b)=(r_1+r_2+r_3)/3$\\[-1pt]
$\scriptstyle A(v_b)=
\frac{(\widehat V(v_b)-\widehat V(v_0))
+(\widehat V(v_b)-\widehat V(v_2))}{\sqrt{3}}$};

\node[nodeBox, fill=orange!30, anchor=west, align=left, minimum width=5.25cm, inner sep=4pt] (lego) at (18.15,-2.15)
{\textbf{$v_o$}\\[-1pt]
$(6,u_o),\ n=2$\\[-1pt]
$y_{<6}^{(4)}=y_{<6}^{(5)}$\\[-1pt]
$\widehat V(v_o)=(r_4+r_5)/2$\\[-1pt]
$\scriptstyle A(v_o)=
\frac{(\widehat V(v_o)-\widehat V(v_0))
+(\widehat V(v_o)-\widehat V(v_2))}{\sqrt{2}}$};

\node[nodeBox, fill=green!20, dashed, anchor=west, align=left, minimum width=5.25cm, inner sep=4pt] (legg) at (18.15,-3.85)
{\textbf{$u_g$}\\[-1pt]
$n=1$\\[-1pt]
$u_g=y_{<6}^{(6)}$\\[-1pt]
singleton prefix; no retained node $v_g$\\[-1pt]
$\scriptstyle A_{\mathrm{tail}}^{(6)}
=(r_6-\widehat V(v_0))+(r_6-\widehat V(v_2))$};

\node[note, font=\scriptsize\bfseries] at (14.50,-4.95)
{Span-credit examples using retained nodes};

\begin{scope}[xshift=1.00cm]

\node[note, anchor=west, text=blue!55!black] at (9.35,-5.50)
{\textbf{Blue path:} $v_0\to v_2\to v_b$};

\node[anchor=east] at (9.25,-6.25) {$y^{(1)}$};

\foreach \x in {9.55,9.90} \node[grayTok] at (\x,-6.25) {};
\foreach \x in {10.25,10.60,10.95,11.30} \node[blueTok] at (\x,-6.25) {};
\foreach \x in {11.65,12.00,12.35} \node[blueTok, fill=blue!12] at (\x,-6.25) {};

\draw[thick] (10.08,-6.49) -- (10.08,-6.01);
\draw[thick] (11.48,-6.49) -- (11.48,-6.01);

\node[note] at (10.08,-5.83) {$p=2$};
\node[note] at (11.48,-5.83) {$p=6$};

\draw[spanBrace] (9.55,-6.63) -- (10.08,-6.63);
\node[note] at (9.82,-6.94) {$A(v_2)$};

\draw[spanBrace] (10.08,-6.63) -- (11.48,-6.63);
\node[note] at (10.78,-6.94) {$A(v_b)$};

\draw[spanBrace] (11.48,-6.63) -- (12.35,-6.63);
\node[note] at (11.92,-6.94) {$A_{\mathrm{tail}}^{(1)}$};

\node[note, anchor=west, text=green!45!black] at (14.55,-5.50)
{\textbf{Green path:} $v_0\to v_2$; $p=6$ is singleton};

\node[anchor=east] at (14.45,-6.25) {$y^{(6)}$};

\foreach \x in {14.75,15.10} \node[grayTok] at (\x,-6.25) {};
\foreach \x in {15.45,15.80,16.15,16.50} \node[greenTok] at (\x,-6.25) {};
\foreach \x in {16.85,17.20,17.55} \node[greenTok, fill=green!12] at (\x,-6.25) {};

\draw[thick] (15.28,-6.49) -- (15.28,-6.01);
\draw[dashed, thick, green!45!black] (16.68,-6.49) -- (16.68,-6.01);

\node[note] at (15.28,-5.83) {$p=2$};
\node[note, text=green!45!black] at (16.68,-5.83) {$p=6$};
\node[note, text=green!45!black] at (16.68,-6.03) {no $v_g$};

\draw[spanBrace] (14.75,-6.63) -- (15.28,-6.63);
\node[note] at (15.01,-6.94) {$A(v_2)$};

\draw[spanBrace] (15.28,-6.63) -- (17.55,-6.63);
\node[note] at (16.41,-6.94) {$A_{\mathrm{tail}}^{(6)}$};

\end{scope}

\end{scope}
\end{tikzpicture}
}
\caption{\textbf{Overview of LEAF.}
From the same $K$ completed responses, LEAF records token surprisals and greedily selects a small set of globally applied boundaries $\mathcal P$ using a separation rule. At each selected boundary $p$, responses are grouped by exact token-prefix equality, forming a tree over the selected boundaries. Non-singleton prefix groups become retained nodes whose values are averages of descendant terminal rewards, while singleton groups are not retained as advantage nodes. The retained nodes on each response path partition the response into spans, and LEAF assigns one advantage to each span.}
\label{fig:leaf_overview}
\end{figure*}

Conceptually, LEAF asks whether shared partial prefixes in the rollout batch predict better or worse final outcomes. It selects at most $B$ token boundaries, groups responses by exact prefix equality at each boundary, retains non-singleton groups as prefix nodes, and assigns advantages to the spans induced by the retained nodes. The only generation cost is the original $K$ responses; $B$ controls the resolution of credit assignment, not the rollout.

LEAF replaces only the advantage estimator $A_{i,t}$. The estimator is inspired by the process supervision of \citet{hou2025treerl}, which assigns intermediate credit using descendant leaf values, but differs in how the tree is obtained and extends naturally to continuous rewards. The resulting estimator is characterized in Section~\ref{sec:theory}. Rather than branching from intermediate prefixes and decoding new continuations, LEAF samples the same $K$ complete responses used by GRPO and constructs a prefix tree \emph{retrospectively} from shared prefixes within the sampled batch. Thus, the rollout budget remains exactly $K$ complete responses.

Let response $i$ be $y^{(i)}=(a_0^{(i)},a_1^{(i)},\ldots,a_{H_i-1}^{(i)})$, where $H_i$ is its length. For a token boundary $p$, define the length-$p$ prefix $y_{<p}^{(i)}=(a_0^{(i)},\ldots,a_{p-1}^{(i)})$. A selected boundary $p$ is applied to all $K$ responses: responses are grouped together if and only if their token prefixes $y_{<p}^{(i)}$ are exactly identical. Therefore, a selected boundary can produce zero, one, or multiple retained prefix nodes, depending on how many repeated prefix groups appear in the batch. LEAF replaces the advantage estimation as follows.

\textbf{Step 1: Responses and surprisals.}
We sample $K$ complete responses from $\pi_{\mathrm{old}}$ and compute their terminal rewards $r_1,\ldots,r_K$. During generation, for each $i\in[K]$, we record the token surprisal
$
    z_{i,t}
    =
    -\log \pi_{\mathrm{old}}(a_t^{(i)}\mid x,y_{<t}^{(i)}),
$
for $0\le t < H_i$. Surprisal is used only to decide where to inspect, without affecting the rollout distribution. 

\textbf{Step 2: Fork boundaries.}
We use high-surprisal token positions to choose prefix boundaries. Let $\ell_{\min}=\min_{i\in[K]}H_i$ and $\Delta=\max\{2,\lfloor \ell_{\min}/(B+1)\rfloor\}$. The shortest length $\ell_{\min}$ restricts candidates to token boundaries that are present in every sampled response, so that the same boundary $p$ can be applied to all $K$ responses when forming exact prefix groups. The separation parameter $\Delta$ prevents the fork budget from being spent on several nearby high-surprisal tokens from the same local uncertainty region, encouraging the selected boundaries to cover distinct parts of the response. We form the candidate list
$
    \mathcal C
    =
    \{(z_{i,t},i,t): i\in[K],\ 1\le t < \ell_{\min}\},
$
and sort $\mathcal C$ in descending order of $z_{i,t}$. We initialize $\mathcal P=\emptyset$ and scan this sorted list greedily. When considering a candidate triple $(z_{i,t},i,t)$, we propose the boundary $p=t$, corresponding to the prefix immediately before the high-surprisal token $a_t^{(i)}$. The boundary $p$ is accepted if either $\mathcal P=\emptyset$ or $\min_{p'\in\mathcal P}|p-p'|\ge \Delta$. If accepted, we update $\mathcal P\leftarrow\mathcal P\cup\{p\}$. The scan stops once $|\mathcal P|=B$ or the candidate list is exhausted. Finally, we sort the accepted boundaries and write $\mathcal P=\{p_1,\ldots,p_b\}$ with $p_1<\cdots<p_b$ and $b\le B$. Accepting a boundary $p$ does not select a single response-specific prefix, as the same boundary $p$ is applied to all $K$ sampled responses, and grouping is performed by exact equality of the token sequences $y_{<p}^{(i)}$.

\textbf{Step 3: Retained prefix nodes.}
For each selected boundary $p\in\mathcal P$, we compare the length-$p$ token prefixes of all sampled responses. For a token prefix $u$, define the descendant index set $I(p,u)=\{i\in[K]:y_{<p}^{(i)}=u\}$, with $n(p,u)=|I(p,u)|$. The equality $y_{<p}^{(i)}=u$ is exact token-prefix matching: two responses are grouped together only if their first $p$ tokens are identical. If $n(p,u)\ge2$, the pair $v=(p,u)$ is retained as a prefix node. Singleton prefixes are discarded because they do not provide a reusable prefix-level value estimate. Let $\mathcal N_{\mathrm{fork}}$ denote the set of retained prefix nodes. For $v=(p,u)$, write $p(v)=p$, $u(v)=u$, $I(v)=I(p,u)$, and $n(v)=n(p,u)$. We also define a virtual root node $v_0$, corresponding to the empty prefix before any token is generated, with $p(v_0)=0$, $I(v_0)=[K]$, and $n(v_0)=K$. Its value is the batch-average reward $\widehat V(v_0)=\bar r$. Each retained node $v$ receives the empirical prefix value
$
    \widehat V(v)
    =
    \frac{1}{n(v)}\sum_{i\in I(v)} r_i.
$
Intuitively, $\widehat V(v)$ estimates how good completions tend to be after passing through the exact token prefix represented by node $v$, while $\widehat V(v_0)$ is the prompt-level average over all sampled completions.

\textbf{Step 4: Span-level advantages.}
For each response $i$, collect all retained nodes whose prefixes lie on the response:
$
    \mathcal N_i
    =
    \{v=(p,u)\in\mathcal N_{\mathrm{fork}}: y_{<p}^{(i)}=u\}.
$
Sort them by boundary position to obtain the path $v_1^{(i)},\ldots,v_{q_i}^{(i)}$, where $0<p(v_1^{(i)})<\cdots<p(v_{q_i}^{(i)})<H_i$, and $q_i$ is the number of retained prefix nodes on response $i$. Set $v_0^{(i)}=v_0$ and $p(v_0^{(i)})=0$. This path partitions response $i$ into spans between consecutive retained prefix nodes and a final tail span. For an internal span ending at node $v_j^{(i)}$, LEAF uses the global-plus-local advantage
$
    A_j^{(i)}
    =
    \left(\mathrm{GA}(v_j^{(i)})+\mathrm{LA}(v_j^{(i)})\right)/\sqrt{n(v_j^{(i)})},
$
where
$
    \mathrm{GA}(v_j^{(i)})
    =
    \widehat V(v_j^{(i)})-\widehat V(v_0)
$
and
$
    \mathrm{LA}(v_j^{(i)})
    =
    \widehat V(v_j^{(i)})-\widehat V(v_{j-1}^{(i)}).
$
$\mathrm{GA}$ measures whether the prefix is better than the prompt-level average, and $\mathrm{LA}$ measures whether the current span moved the response to a better prefix than the previous retained prefix on the same path. The factor $1/\sqrt{n(v_j^{(i)})}$ is a multiplicity correction: if many sampled responses share the same prefix, the same token span appears many times in the batch, and the correction prevents that duplicated span from dominating the update solely because it is repeated.

For each $j=1,\ldots,q_i$, every token $t\in[p(v_{j-1}^{(i)}),p(v_j^{(i)}))$ receives the same advantage $A_j^{(i)}$. Thus, the advantage is not constant over the whole response; it is piecewise constant over spans between retained prefix nodes. For the final tail span, we treat the terminal response as the leaf value. Let $v_{\mathrm{last}}^{(i)}=v_{q_i}^{(i)}$ if $q_i>0$, and $v_{\mathrm{last}}^{(i)}=v_0$ otherwise. We set
$
    A_{\mathrm{tail}}^{(i)}
    =
    (r_i-\widehat V(v_0))
    +
    (r_i-\widehat V(v_{\mathrm{last}}^{(i)})).
$
Every token $t\in[p(v_{\mathrm{last}}^{(i)}),H_i)$ receives $A_{\mathrm{tail}}^{(i)}$. If response $i$ has no retained prefix node, then the whole response is a single tail span and this gives $2(r_i-\widehat V(v_0))$, which has the same direction as the GRPO response-level advantage before normalization. In practice, the raw span advantages are normalized across the sampled batch before entering the loss in Eq.~\eqref{eq:method-loss}.

\section{Theoretical Insights}
\label{sec:theory}

We provide theoretical insights into three design choices of LEAF: the span-level advantage estimator, the fork budget $B$, and surprisal-based boundary selection. Specifically, we show that the empirical advantage estimator that LEAF uses preserves the correct update direction, we characterize the tradeoff regarding the value of $B$ and highlight he issues with pure prefix matching. Due to limited space, we state concise informal versions of the results here and defer the formal statements and proofs to Appendix \ref{app:theory-proofs}.

\begin{theorem}[Validity of span-level advantages (informal)]
For any span ending at prefix $U_j$, the root-relative difference $V(U_j)-V(U_0)$ and the parent-relative difference $V(U_j)-V(U_{j-1})$ are both valid scalar advantages. Under fixed-prefix conditional sampling, $\widehat V(v)$ is an unbiased estimate of $V(v)$ with variance scaling as $1/n(v)$. Thus, shared-prefix averaging denoises the value before LEAF assigns span-level credit.
\end{theorem}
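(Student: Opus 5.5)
The proof splits into two independent parts: a policy-gradient identity that makes prefix-value differences valid advantages, and a sampling argument that gives unbiasedness and $1/n$ variance for $\widehat V(v)$. For a prefix (state) $u$ of length $p$, set $V(u)=\mathbb E_{y\sim\pi_{\mathrm{old}}(\cdot\mid x)}[r(x,y)\mid y_{<p}=u]$, so that $V(U_0)=\mathbb E[r]$ is the prompt value.

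\textbf{Step 1 (telescoping).} Fix a span $[p_{j-1},p_j)$ on a response. The tower property gives, for $t\in[p_{j-1},p_j)$,
\[
V(U_j)-V(U_{j-1})=\sum_{t=p_{j-1}}^{p_j-1}\bigl(V(y_{<t+1})-V(y_{<t})\bigr).
\]
Each summand is exactly the one-step advantage $Q(s_t,a_t)-V(s_t)$ for the deterministic token-appending transition with terminal reward only. So the parent-relative difference is the sum of the true token advantages over the span.

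\textbf{Step 2 (baseline argument).} I would show that using such a span-level scalar in the policy-gradient estimator preserves the expected gradient direction. For each token $t$ in the span, decompose
\[
V(U_j)-V(U_0)=\bigl[V(U_j)-V(y_{<t+1})\bigr]+\bigl[V(y_{<t+1})-V(y_{<t})\bigr]+\bigl[V(y_{<t})-V(U_0)\bigr].
\]
\begin{itemize}
\item The last bracket depends only on $s_t$. It is therefore a state-dependent baseline and contributes zero in expectation against $\nabla\log\pi(a_t\mid s_t)$.
\item The middle bracket is the true advantage.
\item The first bracket is a future-dependent difference. Conditioned on $y_{<t+1}$ it has mean zero, since $\mathbb E[V(U_j)\mid y_{<t+1}]=V(y_{<t+1})$ by the tower property. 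Because $\nabla\log\pi(a_t\mid s_t)$ is measurable with respect to $y_{<t+1}$, this term also vanishes in expectation.
\end{itemize}
Hence $\mathbb E[\nabla\log\pi(a_t\mid s_t)\,(V(U_j)-V(U_0))]=\mathbb E[\nabla\log\pi(a_t\mid s_t)\,A(s_t,a_t)]$, and the same holds for the parent-relative version and for their sum. In the formal statement, ``valid'' should mean exactly this: an unbiased policy-gradient contribution per token. The positive factor $1/\sqrt{n}$ rescales but does not change the direction.

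\textbf{Step 3 (estimation).} Under fixed-prefix conditional sampling, conditioned on the event that $n(v)$ responses share prefix $u$, their continuations are i.i.d.\ draws from $\pi_{\mathrm{old}}(\cdot\mid x,u)$. Then $\widehat V(v)$ is a sample mean of $n(v)$ i.i.d.\ rewards with mean $V(u)$, so it is unbiased with variance $\mathrm{Var}(r\mid u)/n(v)$.

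\textbf{Main obstacle.} The hard part is conditioning rigor. In the retrospective tree, $n(v)$ and the node set are random, and the boundaries are chosen using surprisals of the same samples. So I would state the result conditionally on the prefix grouping, as the informal ``fixed-prefix'' assumption suggests, and argue that selecting boundary positions via tokens before $p$ does not affect the law of continuations after $p$. Surprisals at later tokens can influence the choice of $p$, however. I would therefore make independence of the selection from post-prefix rewards an explicit assumption, rather than try to prove it.
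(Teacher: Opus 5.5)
Your proposal is correct and is essentially the paper's proof. Both arguments rest on the tower property (the score of $a_t$ is measurable with respect to any prefix containing $a_t$) and on the zero-mean score against the $S_t$-measurable baselines $V(U_0)$ and $V(U_{j-1})$; the paper conditions on $U_j$ and compares directly with $\mathbb{E}[R\,Z_j]$, whereas you condition on $y_{<t+1}$ and land on the true per-token advantage, which is equivalent by one more tower step. Your unbiasedness and $1/n$-variance step is the same sample-mean argument under the fixed-prefix assumption.

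Your aside that the $1/\sqrt{n}$ factor ``does not change the direction'' is not part of the theorem. It also holds only per sample: $n(v)$ depends on the prefix $U_j$, so in expectation it acts as a prefix-dependent reweighting.
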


\begin{theorem}[Fork budget tradeoff (informal)]
The fork budget $B$ increases the number of selected boundaries at which prefix values may be estimated. With rollout budget $K$, at most $B\lfloor K/2\rfloor$ non-root prefix nodes can be retained. In addition, increasing $B$ gives finer credit assignment but can create more low-support value estimates. With $K=8$ each boundary can retain at most four nodes; using $B=2$ keeps the worst-case number of non-root prefix values at $8$ while still allowing internal span-level credit.
\end{theorem}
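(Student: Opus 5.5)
The plan is a counting argument carried out boundary by boundary, followed by a support argument for the resolution tradeoff.

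\textbf{Step 1: per-boundary bound.} Fix a selected boundary $p\in\mathcal P$. The descendant sets $I(p,u)$, as $u$ ranges over the distinct length-$p$ prefixes in the batch, partition $[K]$. Each response $i$ has exactly one prefix $y_{<p}^{(i)}$, and $p<\ell_{\min}\le H_i$ guarantees that this prefix exists. A node $(p,u)$ is retained only if $n(p,u)\ge 2$. If $m_p$ retained nodes share boundary $p$, their descendant sets are disjoint subsets of $[K]$, each of size at least $2$, so $2m_p\le K$ and hence $m_p\le\lfloor K/2\rfloor$. This bound is attained: split $[K]$ into $\lfloor K/2\rfloor$ pairs of responses that agree on their first $p$ tokens and differ across pairs.

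\textbf{Step 2: sum over boundaries.} Step 2 of the method accepts at most $B$ boundaries, so $|\mathcal P|=b\le B$. Retained nodes are indexed by pairs $(p,u)$, so nodes at different boundaries are distinct objects. Therefore $|\mathcal N_{\mathrm{fork}}|=\sum_{p\in\mathcal P} m_p\le B\lfloor K/2\rfloor$. For $K=8$, each boundary retains at most $4$ nodes, and with $B=2$ there are at most $8$ non-root prefix values. Internal span-level credit remains possible because, for example, boundary $p_1$ can hold a shared node of size $\ge 2$ that splits at $p_2$.

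\textbf{Step 3: the support--resolution tradeoff.} This is the main obstacle, because it is qualitative and must be turned into a precise statement. I would formalize it with two monotonicity facts.

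The first is \emph{resolution}. Adding boundaries refines each response's path $v_0^{(i)},\dots,v_{q_i}^{(i)}$. Inserting a boundary can only split existing spans, so the number of distinct advantage values per response is nondecreasing in $B$.

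The second is \emph{support}. Prefix equality at $p'$ implies prefix equality at every $p\le p'$. Hence nodes on a path have nested descendant sets, $I(v_{j}^{(i)})\subseteq I(v_{j-1}^{(i)})$, so support is nonincreasing with depth. Adding more boundaries creates deeper nodes, and the total $\sum_{v:\,p(v)=p} n(v)\le K$ gets divided among up to $\lfloor K/2\rfloor$ nodes, so per-node support can fall to $2$.

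I would then invoke the preceding validity theorem, under which $\mathrm{Var}(\widehat V(v))\propto 1/n(v)$. Low-support nodes therefore carry variance up to order $\sigma^2/2$, versus $\sigma^2/K$ at the root. This exhibits the tradeoff: larger $B$ gives finer partitions but admits nodes with $n(v)$ near $2$ and correspondingly noisy value estimates.

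I expect the difficulty to lie in phrasing this monotonicity cleanly despite the greedy $\Delta$-separated selection. The boundary set for budget $B$ is not necessarily a subset of the set for $B+1$, since $\Delta$ depends on $B$. I would handle this by stating the tradeoff for a fixed candidate ordering and fixed $\Delta$, or in worst-case terms over admissible boundary sets.
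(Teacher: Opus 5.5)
Your Steps 1--2 are the paper's counting argument. The paper fixes a retention threshold $m_0\ge2$ and observes that exact prefix matching at a boundary partitions the $K$ responses into disjoint groups, so at most $\lfloor K/m_0\rfloor$ groups of size $\ge m_0$ survive per boundary. It then multiplies by $B$; you do the case $m_0=2$.

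The routes diverge on the support--resolution half.

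\textbf{The paper's route.} It makes the tradeoff probabilistic. It posits Hoeffding-type concentration of $\widehat V(v)$ around a \emph{selection-conditional} value $V_{\mathrm{sel}}(v)$ and takes a union bound over the at most $B\lfloor K/m_0\rfloor$ retained nodes. With probability at least $1-\delta$ this gives a uniform error of $R_{\max}\sqrt{\log(2B\lfloor K/m_0\rfloor/\delta)/(2n_{\min})}$. The cost of a larger $B$ then appears as a $\log B$ multiplicity term and through $n_{\min}$. The ``finer credit'' side is never formalized. What this route buys is a simultaneous, quantitative bound over all retained nodes.

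\textbf{Your route.} You argue structurally: nested boundary sets refine each response's span partition, descendant sets are nested along a path, and you invoke the per-node variance from the validity theorem. This buys an explicit resolution statement. You also correctly notice that $\Delta=\max\{2,\lfloor\ell_{\min}/(B+1)\rfloor\}$ makes the greedy boundary sets non-nested in $B$. With $\Delta$ and the candidate order fixed, the accepted sets, and hence $\mathcal N_{\mathrm{fork}}$, grow by inclusion, so $n_{\min}$ is nonincreasing in $B$. The paper never needs this, because its bounds hold for an arbitrary set of at most $B$ boundaries.

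\textbf{Cautions on your variance step.} None of these affects the counting bound or the informal claims.
\begin{enumerate}
\item The variance formula in the validity theorem rests on fixed-prefix conditional sampling for a \emph{fixed} $(p,u)$. LEAF's boundaries are chosen from surprisals of sampled tokens, including the continuation token $a_p^{(i)}$ itself, and only groups with $n\ge2$ are kept. This selection effect is exactly why the paper introduces $V_{\mathrm{sel}}(v)$. You should state explicitly that you assume selection does not distort the descendant reward law.
\item The comparison ``$\sigma^2/2$ versus $\sigma^2/K$'' presumes a common $\sigma^2$. In fact $\operatorname{Var}(R\mid Y_{<p}=u)$ differs from $\operatorname{Var}(R)$ and can be much smaller at deep prefixes. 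Phrase it as an upper bound using $\operatorname{Var}(R\mid\cdot)\le R_{\max}^2/4$ for rewards in $[0,R_{\max}]$.
\item Refinement makes the number of spans nondecreasing, not the number of distinct advantage values.
\end{enumerate}
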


\begin{theorem}[Pure prefix matching is shallow-biased (informal)]
Prefix-collision mass is monotone non-increasing with boundary depth, so a selector that maximizes collisions alone selects from the earliest high-collision plateau. 
Hence, pure prefix matching can spend the fork budget on shallow boilerplate prefixes. LEAF separates the two roles: surprisal selects where to inspect, while exact prefix matching decides which prefixes at that boundary have enough support to retain.
\end{theorem}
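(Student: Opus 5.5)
The plan is to formalize ``prefix-collision mass'' at boundary $p$ and show it is monotone in $p$. Define it either as the expected number of colliding pairs
\[
C(p)=\sum_{i<i'}\mathbb{P}\bigl(y_{<p}^{(i)}=y_{<p}^{(i')}\bigr),
\]
or as the empirical retained support $S(p)=\sum_{u:\,n(p,u)\ge 2} n(p,u)$ for the realized batch. I would prove monotonicity for both, starting with the deterministic empirical version, since it is what a selector can actually see. The key observation is that prefix equality is inherited downward. If $p\le p'$ and $y_{<p'}^{(i)}=y_{<p'}^{(i')}$, then truncating gives $y_{<p}^{(i)}=y_{<p}^{(i')}$.

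Hence the partition of $[K]$ induced by length-$p'$ prefixes refines the partition induced by length-$p$ prefixes. Every block $I(p',u')$ lies inside a unique block $I(p,u)$, with $u$ the truncation of $u'$.

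\textbf{Empirical pair count.} The number of colliding pairs $\sum_u \binom{n(p,u)}{2}$ is non-increasing in $p$. This holds because the set of colliding pairs at $p'$ is a subset of the set at $p$.

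\textbf{Empirical retained support.} $S(p)\ge S(p')$ also holds. Any index in a non-singleton block at $p'$ lies in a block at $p$ that contains that non-singleton block, so it too is non-singleton.

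\textbf{Expected collisions.} For the probabilistic version, the events $\{y_{<p'}^{(i)}=y_{<p'}^{(i')}\}$ are nested and decreasing in $p'$, so each term of $C(p)$ is monotone. For i.i.d.\ sampling the term equals $\sum_u \pi_{\mathrm{old}}(u\mid x)^2$ over length-$p$ prefixes, and monotonicity follows from $\sum_{u'\succ u}\pi(u')^2\le(\sum_{u'\succ u}\pi(u'))^2=\pi(u)^2$. I would also handle responses shorter than $p$ by treating them as non-colliding, which the restriction $p<\ell_{\min}$ makes moot anyway.

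\textbf{Shallow bias of the selector.} Next I would define a collision-only selector as one that picks $B$ boundaries, subject to the $\Delta$-separation rule, greedily maximizing $S(p)$ or $C(p)$. Monotonicity gives $\arg\max_p S(p)\ni 1$. Let $p^\star=\max\{p: S(p)=S(1)\}$ be the end of the earliest plateau. Any maximizer lies in $[1,p^\star]$, so the greedy selector first picks from this plateau. Subsequent picks are the smallest separated positions with maximal remaining mass, because the plateau ties and is then followed by a step down. To make ``boilerplate'' precise, I would note that on the plateau all responses share identical prefixes, so $\widehat V(v)=\widehat V(v_0)$ for the all-sharing node. Then $\mathrm{GA}=0$ and $\mathrm{LA}$ is zero or equal to a parent difference. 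Such boundaries carry no reward-relevant information, which exhibits the failure mode concretely.

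\textbf{Contrast with LEAF.} The last step argues the LEAF contrast. In LEAF, boundary choice depends only on surprisal $z_{i,t}$, and matching only determines $\mathcal N_{\mathrm{fork}}$ at those boundaries. I would give a small constructed example with a common preamble of length $m$ followed by a reward-determining fork at depth $m+1$. In this example a collision-maximizing selector with $B=2$ and $\Delta\le m/2$ can place both boundaries inside the preamble. By contrast, the high-surprisal token sits at the fork, so LEAF selects $p=m+1$ and obtains nonzero $\mathrm{GA}$.

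\textbf{Expected obstacle.} The main difficulty is the definition and tie-breaking for the ``earliest high-collision plateau'' under the separation constraint. Monotonicity alone only gives that the maxima are weakly shallow. Showing that the selector actually spends its budget there requires specifying the tie-breaking rule, for example earliest index, or assuming the plateau has length at least $(B-1)\Delta$. I would state the result under such an explicit assumption rather than claim it unconditionally.
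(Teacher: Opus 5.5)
Your monotonicity argument is the paper's. The paper works with the population collision mass $\kappa(p)=\sum_u\mathbb{P}(U_p=u)^2$, which is your $C(p)/\binom{K}{2}$ under i.i.d.\ sampling, and it uses the same refinement inequality $\sum_a q_a^2\le(\sum_a q_a)^2$. Your empirical pair-count and retained-support versions are correct additions. The paper avoids your tie-breaking obstacle by claiming only that a single maximizer of $\kappa$ lies in the earliest plateau (uniquely $p_1$ when $\kappa(p_1)>\kappa(p_2)$).

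The step that fails is the one making ``boilerplate'' precise: it is not true that all responses share identical prefixes on the plateau. A plateau only says the collision measure does not drop. Suppose the first token splits a batch of $K=8$ into two blocks of size $4$ that stay intact. Then the pair count stays at $12$, but the retained nodes carry the two branch means, which generally differ from $\bar r$, so $\mathrm{GA}\neq 0$. For $S$, even the partition can change on a plateau: a block of $8$ refining into two blocks of $4$ keeps $S=8$. So shallow high-collision boundaries are not uninformative in general, and the paper does not claim they are. Its boilerplate conclusion is existential. It exhibits $p_0<p_\star$ with $\kappa(p_0)>\kappa(p_\star)>0$ but $\eta(p_0)=0$ and $\eta(p_\star)=1$, where $\eta(p)=\operatorname{Var}(\mathbb{E}[R\mid U_p])/\operatorname{Var}(R)$. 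The construction uses a prefix shared by all responses at $p_0$, and two equiprobable prefixes at $p_\star$ with $R=\mathbf{1}\{U_{p_\star}=u_1\}$. Your preamble example is exactly this construction. So drop the general claim and rest the conclusion on the example, preferably measured by a population quantity such as $\eta$ rather than the batch-dependent $\mathrm{GA}$.

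Your LEAF-contrast step has an off-by-one error that breaks it. LEAF proposes $p=t$ for a surprising token $a_t$ and groups on $y_{<t}$, which excludes $a_t$. If, as in your example, the surprisal peak is the first token at which responses differ, then $t=m$ and $y_{<m}$ is the common preamble. LEAF then retains a single node with $n=K$, $\widehat V=\bar r$ and $\mathrm{GA}=\mathrm{LA}=0$. That is exactly the boilerplate node you attribute to the collision selector; indeed $p=m$ lies on the collision plateau itself. Obtaining $p=m+1$ would require the surprisal peak to sit one token after the divergence, and $\Delta\ge 2$ prevents selecting both $m$ and $m+1$. In your example, any nonzero $\mathrm{GA}$ has to come from a second boundary beyond $m$ at which non-singleton groups survive. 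The paper proves no LEAF-side guarantee; the last sentence of the statement describes the algorithm by construction. Either drop this part or restate it in that corrected form.
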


\section{Experimental Setup}
\label{sec:experiments}

\textbf{Datasets.} We evaluate LEAF on four speech-language benchmarks covering short-form SQA, long-form SQA, and AST. For short-form SQA, we use Part I of LibriSQA \cite{zhao2023librisqa}, an open-ended QA benchmark over LibriSpeech audio \cite{panayotov2015librispeech}. For long-form SQA, we use two LongAudio-derived settings \cite{ghost2025longaudio}: DailyTalk \cite{lee2023dailytalk}, after removing multiple-choice examples, and a Europarl/VoxPopuli split \cite{koehn2005europarl,wang2021voxpopuli}, which we refer to as LongAudio. We restrict both long-form settings to audio segments of at most $40$ seconds. For AST, we use CoVoST2 English-to-German \cite{wang2020covost2}. We use dataset-provided instructions when available and only adapt input formatting to each SALLM; split details are in Appendix~\ref{app:datasets}.

\noindent \textbf{Baselines and backbones.} We compare LEAF with the state-of-the-art GRPO speech-RL baseline of \citet{elmakies2025speechgrpo}. Their reported results use full-parameter tuning; our controlled LEAF-vs.-GRPO experiments use LoRA for both methods to isolate the RL objective under the same trainable-parameter budget. Due to compute constraints, we do not replicate their full-parameter GRPO baseline. Moreover, \citet{elmakies2025speechgrpo} report the rollout budget but not other training details, such as the number of epochs or learning rate, which makes an exact replication difficult. When direct comparison to their reported numbers is possible, we include it; otherwise, we use our LoRA GRPO implementation. For rollouts, we match their reported budget with $K=8$ responses per prompt. LEAF uses the same rollout budget with fork budget $B=2$; all other optimization and decoding hyperparameters are matched between LEAF and GRPO and reported in Appendix~\ref{app:training_params}. We evaluate Granite Speech 3.3 2B/8B \cite{saon2025granitespeech3}, Granite-4.0-1B-Speech \cite{granite-4.0-1b-speech}, and Qwen2-Audio-7B \cite{chu2024qwen2audio}.

\noindent \textbf{Evaluation metrics.} To ensure fair comparison with the existing state-of-the-art \citet{elmakies2025speechgrpo}, we report BLEU, ROUGE-1/2/L, METEOR, and BERTScore-F1 using temperature $0.9$ and top-$p=0.9$ decoding. To complement automatic metrics, we use M-Prometheus-14B \cite{pombal2025mprometheus} for rubric-grounded Likert-5 scoring and GEMBA-style DA-100 \cite{kocmi2023gembada100} with Qwen2.5-14B-Instruct \cite{qwen2025qwen25technicalreport} for continuous per-item scores.\footnote{We exclude Qwen2-Audio-7B from DA-100 evaluation to avoid family bias.} DA-100 scores are averaged over $N=5$ judgments at $T=0.7$ and used for tail-risk analysis. We also perform matched A/B comparisons using JudgeLM-13B \cite{zhu2025judgelm}: each anonymized pair is judged in both orders, ties are allowed, and only order-consistent verdicts are counted to reduce position bias. Pairwise uses greedy decoding.

\noindent \textbf{Statistical analysis.} We report $95\%$ confidence intervals for judge means, $\mathrm{CVaR}@\alpha$, and threshold fractions using $B=1000$ non-parametric bootstrap resamples. Tail risk is measured by $\mathrm{CVaR}@\alpha$, the mean score over the worst $\alpha$ fraction of examples \cite{rockafellar2000cvar}. Because Likert-5 $\mathrm{CVaR}@10$ saturates at the minimum score whenever at least $10\%$ of examples receive score $1$, we use continuous GEMBA-DA scores for the main tail-risk analysis.

\section{When Does LEAF Work?}
\label{sec:why}

Before presenting the main results, we ask whether LEAF has the raw structure it needs to operate. IT relies on an empirical property of i.i.d.\ rollout groups: sampled responses must share exact prefixes often enough, and at sufficiently deep token positions, for the induced virtual forks to provide useful credit-assignment signal. 

We call this property \emph{forkability}. To define it, consider $K$ i.i.d.\ rollouts for a fixed prompt. For a selected boundary $t$, group responses by exact equality of their length-$t$ prefixes. A \emph{fork-fire} occurs at $t$ if at least one prefix group has size at least two. A fork-fire is \emph{usable} if at least one such non-singleton group has non-trivial reward variation,
e.g., $\max_{i,j\in G} |r_i-r_j| > \tau_r$ for some retained prefix group $G$ and tolerance $\tau_r \ge 0$. Thus, usable fork-fires correspond to shared prefixes whose descendant responses lead to distinguishable outcomes, rather than merely producing duplicate completions. The usable fork-fire rate $r_f$ is the fraction of LEAF-selected boundary attempts that are usable. We measure forkability directly across four speech-understanding datasets and three SALLM backbones, a $3\!\times\!4$ design with $\approx\!\!36{,}000$
measured fork attempts. The usable fork-fire rate is $\ge 72.9\%$ in
every one of the twelve $(\text{backbone},\text{task})$ cells and pooled
rates differ by $\le 1.4 \%$ across backbones, indicating that
forkability is a property of speech-understanding tasks rather than
the model.

\textbf{Direct measurement.} We run LEAF
from each of the twelve cells and run it on a held-out validation+test
split of $N\!=\!3000\!-\!3400$ examples per cell.
Table~\ref{tab:fork-headline} reports the pooled rates with $95\%$
per-position bootstrap CIs. The
pooled-by-backbone rates differ by $1.4$ pps,
recovering backbone-invariance. The pooled-by-task rates span
$20$ pp and follow task difficulty (easier QA tasks fork more reliably
than translation/long-form transcription). Across all twelve cells,
$r_f$ never drops below $72.9\%$; the per-cell breakdown
(Tab.~\ref{tab:fork-percell-app}) is in the appendix.
Figure~\ref{fig:fork-fire-position} plots usable rate vs.\ token
position; for each task, the three backbone curves overlap within their
$95\%$ CIs at every position, giving an analogue of the
pooled claim.

\begin{table}[t]\centering\scriptsize
\setlength{\tabcolsep}{4pt}
\begin{tabular}{lr}\toprule
& Usable fork-fire rate $r_f$ (\%, 95\% CI) \\\midrule
\multicolumn{2}{l}{\emph{Pooled by backbone (across 4 tasks)}}\\
Granite 3.3-2B & 84.6 \,(80.6--88.0) \\
Granite 3.3-8B & 84.6 \,(80.8--87.6) \\
Granite 4.0-1B & 86.0 \,(82.4--88.9) \\
\midrule
\multicolumn{2}{l}{\emph{Pooled by task (across 3 backbones)}}\\
LibriSQA               & 93.2 \,(90.9--94.7) \\
DailyTalk    & 91.6 \,(88.1--94.1) \\
CoVoST2           & 81.1 \,(75.9--85.1) \\
LongAudio       & 74.0 \,(70.1--77.6) \\
\midrule
All 12 cells           & \textbf{85.1} \,(82.9--86.9) \\
\bottomrule\end{tabular}
\caption{Usable fork-fire rate $r_f$, pooled by backbone (top) and by task
(middle). Confidence intervals are per-position non-parametric bootstrap
($n_{\text{boot}}\!=\!5000$); positions are the unit of resampling
because rollouts within an example are correlated through the sampler.
Pooled-by-backbone rates lie within $1.4$ pp of each other.
}
\label{tab:fork-headline}
\end{table}

\begin{figure}[t]\centering
\includegraphics[width=\linewidth]{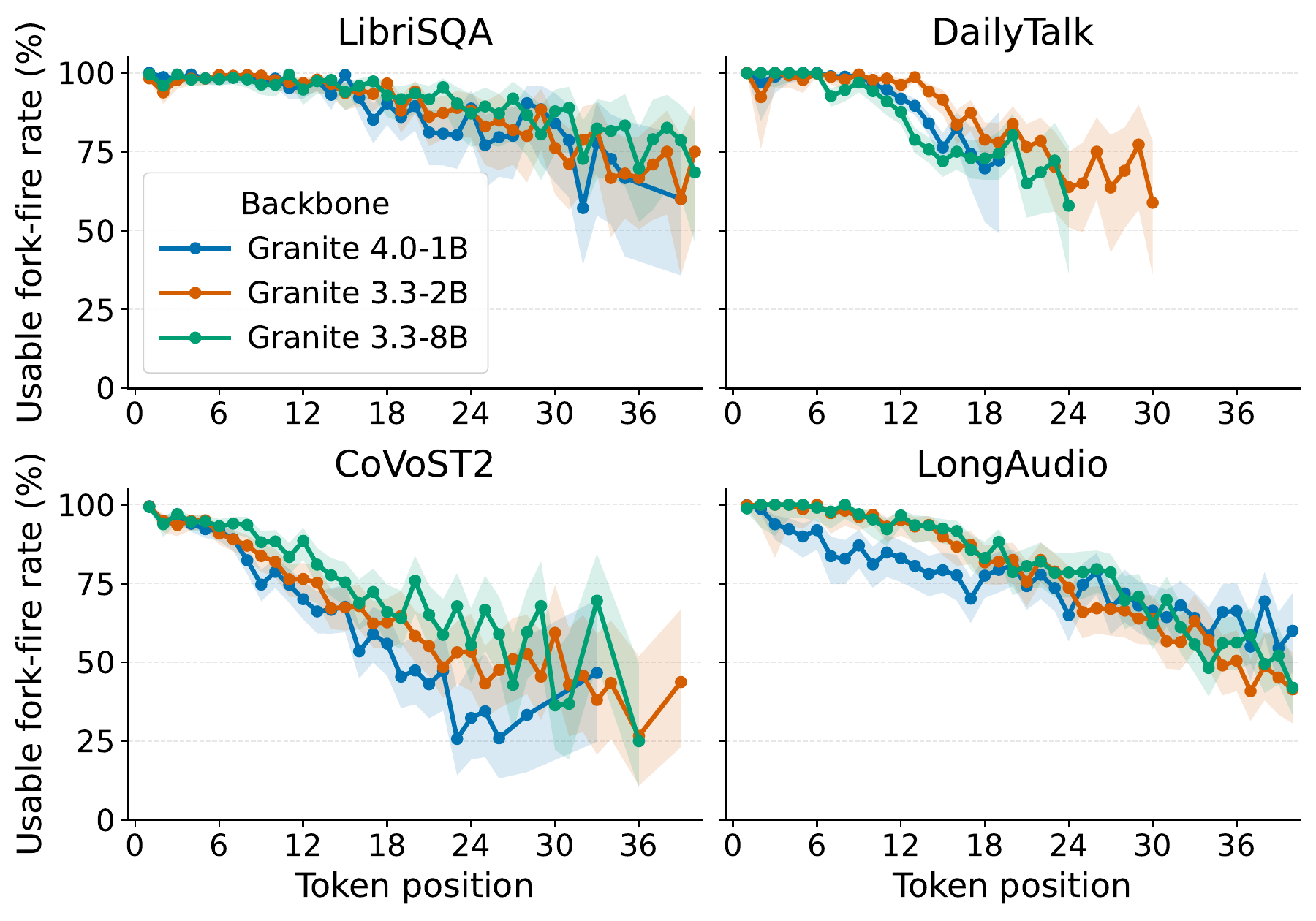}
\caption{Usable fork-fire rate vs.\ token position, one panel per task,
three backbones overlaid. Shaded bands are pointwise $95\%$ CIs;
positions with fewer than $15$ attempts are dropped to suppress
small-$n$ noise; $x$-axis is clipped to position $40$. Backbones overlap
within their CIs in every panel, supporting backbone-invariance.}
\label{fig:fork-fire-position}
\end{figure}

\textbf{Forks branch in distinct outcomes, and reach the
response.} A high rate alone is not enough: fires must produce
non-singleton groups with discriminating rewards, and must occur deep
enough into the response to discriminate body tokens rather than just
the opening. Per-cell statistics in the appendix
(Tab.~\ref{tab:fork-aux-app}, \ref{tab:fork-decay-app}) confirm both:
mean group sizes lie in $4.0\text{--}6.2$ of the $K\!=\!8$ budget,
within-group BLEU spread is strictly positive in every cell, and the
$90\%$-cumulative depth $P_{90}$ ranges from $14$ to $39$ token
positions across cells --- forks reach into the body, particularly on
long-form transcription where $P_{90}\!\ge\!35$ across all three
backbones.

\section{Results}
\label{sec:results}

\begin{table*}[t]
\centering\tiny
\caption{End-to-end results across datasets, backbones, and training methods. Each row is the best-BLEU checkpoint per setting. M-Prometheus Likert is a 1--5 rubric-grounded judge; GEMBA-DA is a 0--100 continuous score.}
\label{tab:master}
\begin{tabular}{@{}lllrrrrrrrrr@{}}
\toprule
Dataset & Backbone & Method & BLEU & R-1 & R-2 & R-L & METEOR & BS-F1 & Pred~tok & Likert$_{1\text{-}5}$ & DA$_{0\text{-}100}$ \\
\midrule
\multirow{12}{*}{LibriSQA}
& \multirow{4}{*}{Granite 3 2B}
& base & 28.02 & 56.80 & 40.35 & 51.39 & 53.02 & 91.18 & 22.3 & 2.65\,$_{[2.59,\,2.71]}$ & 70.52\,$_{[69.41,\,71.68]}$ \\
& & +Full GRPO & 44.90 & 68.56 & 53.35 & 64.88 & 68.48 & 94.45 & --- & --- & --- \\
& & +LoRA GRPO & 40.73 & 67.25 & 52.33 & 63.76 & 66.78 & 94.01 & 19.7 & 3.20\,$_{[3.15,\,3.26]}$ & 84.89\,$_{[84.02,\,85.81]}$ \\
& & \best{+LEAF} & \best{45.82} & \best{68.84} & \best{54.19} & \best{65.60} & \best{68.53} & \best{94.64} & \best{18.5} & \best{3.34\,$_{[3.29,\,3.39]}$} & \best{86.64\,$_{[85.90,\,87.40]}$} \\
\cmidrule(l){2-12}
& \multirow{3}{*}{Granite 3 8B}
& base & 24.62 & 54.33 & 38.34 & 48.43 & 53.29 & 90.87 & 26.1 & 2.28\,$_{[2.22,\,2.33]}$ & 64.57\,$_{[63.35,\,65.78]}$ \\
& & +Full GRPO & 46.40 & 69.57 & 57.49 & 66.16 & 69.61 & 94.76 & --- & --- & --- \\
& & \best{+LEAF} & \best{47.33} & \best{69.65} & \best{57.69} & \best{66.42} & \best{69.71} & \best{94.80} & \best{18.6} & \best{3.41\,$_{[3.35,\,3.46]}$} & \best{87.67\,$_{[86.93,\,88.42]}$} \\
\cmidrule(l){2-12}
& \multirow{2}{*}{Granite 4.0 1B}
& base & 27.78 & 56.43 & 40.00 & 51.00 & 52.97 & 91.13 & 22.5 & 2.62\,$_{[2.56,\,2.68]}$ & 69.99\,$_{[68.85,\,71.25]}$ \\
& & \best{+LEAF} & \best{44.61} & \best{67.65} & \best{52.60} & \best{64.11} & \best{67.37} & \best{94.36} & \best{18.6} & \best{3.21\,$_{[3.15,\,3.27]}$} & \best{84.38\,$_{[83.52,\,85.29]}$} \\
\cmidrule(l){2-12}
& \multirow{3}{*}{Qwen2-Audio}
& base & 12.93 & 38.14 & 24.99 & 34.59 & 40.12 & 88.19 & 45.0 & 2.09\,$_{[2.03,\,2.15]}$ & --- \\
& & +LoRA GRPO & 40.95 & 66.42 & 51.59 & 63.05 & 66.89 & 94.06 & 20.1 & 3.32\,$_{[3.27,\,3.38]}$ & --- \\
& & \best{+LEAF} & \best{45.85} & \best{68.94} & \best{54.32} & \best{65.59} & \best{68.65} & \best{94.64} & \best{18.8} & \best{3.43\,$_{[3.37,\,3.48]}$} & --- \\
\midrule
\multirow{3}{*}{DailyTalk}
& \multirow{3}{*}{Granite 3 2B}
& base & 2.00 & 15.52 & 4.79 & 12.27 & 22.94 & 84.04 & 71.8 & 1.30\,$_{[1.28,\,1.33]}$ & 44.04\,$_{[43.19,\,45.02]}$ \\
& & +LoRA GRPO & 17.56 & 46.91 & 27.97 & 43.35 & 43.70 & 91.63 & 19.1 & 2.56\,$_{[2.52,\,2.60]}$ & 62.62\,$_{[61.66,\,63.68]}$ \\
& & \best{+LEAF} & \best{22.61} & \best{51.98} & \best{31.14} & \best{48.14} & \best{48.46} & \best{92.49} & \best{15.3} & \best{2.86\,$_{[2.82,\,2.90]}$} & \best{68.99\,$_{[68.09,\,69.89]}$} \\
\midrule
\multirow{3}{*}{LongAudio}
& \multirow{3}{*}{Granite 3 2B}
& base & 8.46 & 31.01 & 13.91 & 23.67 & 34.76 & 86.70 & 70.1 & 2.01\,$_{[1.94,\,2.07]}$ & 66.61\,$_{[65.20,\,68.02]}$ \\
& & +LoRA GRPO & 22.44 & 49.60 & 31.77 & 43.56 & 45.59 & 90.83 & 40.5 & 2.43\,$_{[2.37,\,2.49]}$ & 67.11\,$_{[65.79,\,68.38]}$ \\
& & \best{+LEAF} & \best{31.12} & \best{56.24} & \best{37.56} & \best{50.27} & \best{50.62} & \best{92.63} & \best{30.2} & \best{2.93\,$_{[2.88,\,2.97]}$} & \best{70.21\,$_{[69.09,\,71.27]}$} \\
\midrule
\multirow{8}{*}{CoVoST2}
& \multirow{3}{*}{Granite 3 2B} 
& base & 11.45 & 23.34 & 12.88 & 22.16 & 25.44 & 79.78 & 15.5 & 1.08\,$_{[1.07,\,1.08]}$ & 45.07\,$_{[44.70,\,45.47]}$ \\
& & +LoRA GRPO & 27.77 & 55.68 & 33.76 & 52.59 & 53.28 & 85.43 & 22.1 & 1.92\,$_{[1.90,\,1.94]}$ & 64.94\,$_{[64.48,\,65.39]}$ \\
& & \best{+LEAF} & \best{30.57} & \best{59.20} & \best{37.21} & \best{56.04} & \best{56.14} & \best{86.28} & \best{21.5} & \best{2.13\,$_{[2.11,\,2.16]}$} & \best{69.09\,$_{[68.69,\,69.52]}$} \\
\cmidrule(l){2-12}
& \multirow{3}{*}{Granite 3 8B}
& base & 16.66 & 30.55 & 18.30 & 28.93 & 31.98 & 81.16 & 17.1 & 1.14\,$_{[1.14,\,1.15]}$ & 49.32\,$_{[48.92,\,49.74]}$ \\
& & +LoRA GRPO & 31.95 & 59.79 & 38.01 & 56.48 & 57.41 & 86.70 & 22.3 & 2.25\,$_{[2.23,\,2.27]}$ & 73.05\,$_{[72.63,\,73.47]}$ \\
& & \best{+LEAF} & \best{33.87} & \best{61.58} & \best{40.09} & \best{58.31} & \best{59.11} & \best{87.20} & \best{21.9} & \best{2.42\,$_{[2.40,\,2.45]}$} & \best{75.01\,$_{[74.62,\,75.43]}$} \\
\cmidrule(l){2-12}
& \multirow{2}{*}{Granite 4.0 1B}
& base & 11.80 & 23.67 & 13.15 & 22.51 & 25.91 & 79.91 & 15.5 & 1.07\,$_{[1.07,\,1.08]}$ & 44.78\,$_{[44.38,\,45.18]}$ \\
& & \best{+LEAF} & \best{31.12} & \best{59.04} & \best{37.12} & \best{55.84} & \best{56.74} & \best{86.44} & \best{21.9} & \best{2.14\,$_{[2.11,\,2.16]}$} & \best{68.39\,$_{[67.95,\,68.84]}$} \\
\bottomrule
\end{tabular}
\end{table*}

\begin{table*}[t]
\centering\tiny
\caption{Tail-risk analysis on per-item GEMBA-DA continuous scores. CVaR@$\alpha$ is the mean of the worst $\alpha$ fraction of items; higher is better. Bootstrap 95\% CIs use $B{=}1000$ resamples.}
\label{tab:cvar}
\begin{tabular}{@{}lllrrrrr@{}}
\toprule
Dataset & Backbone & Method & $N$ & Mean [95\% CI] & CVaR@10 [CI] & CVaR@25 [CI] & frac~$<$~50 [CI] \\
\midrule
\multirow{10}{*}{LibriSQA}
& \multirow{3}{*}{Qwen2-Audio}
& base & 2620 & 55.74 [54.23, 57.25] & 0.00 [0.00, 0.00] & 1.25 [0.80, 1.88] & 38.59\% \\
& & +LoRA GRPO & 2620 & 83.62 [82.63, 84.48] & 19.06 [15.82, 23.40] & 51.04 [47.74, 54.08] & 10.31\% \\
& & \best{+LEAF} & 2620 & \best{85.78 [84.94, 86.58]} & \best{29.31 [24.82, 33.73]} & \best{58.19 [55.46, 61.01]} & \best{7.86\%} \\
\cmidrule(l){2-8}
& \multirow{3}{*}{Granite 3 2B}
& base & 2620 & 70.52 [69.27, 71.71] & 6.25 [5.18, 7.60] & 22.12 [19.98, 24.25] & 24.77\% \\
& & +LoRA GRPO & 2620 & 84.89 [84.00, 85.72] & 24.27 [20.76, 28.27] & 54.04 [50.99, 56.91] & 9.62\%  \\
& & \best{+LEAF} & 2620 & \best{86.64 [85.83, 87.38]} & \best{32.04 [28.03, 36.30]} & \best{59.56 [56.81, 62.06]} & \best{7.40\% } \\
\cmidrule(l){2-8}
& \multirow{2}{*}{Granite 3 8B}
& base & 2620 & 64.57 [63.26, 65.81] & 5.15 [4.41, 6.06] & 16.96 [15.34, 18.63] & 31.56\% \\
& & \best{+LEAF} & 2620 & \best{87.67 [86.95, 88.45]} & \best{36.04 [31.61, 40.66]} & \best{62.78 [60.14, 65.49]} & \best{6.45\%} \\
\cmidrule(l){2-8}
& \multirow{2}{*}{Granite 4.0 1B}
& base & 2620 & 69.99 [68.71, 71.18] & 5.85 [4.80, 7.13] & 21.57 [19.42, 23.64] & 25.61\% \\
& & \best{+LEAF} & 2620 & \best{84.38 [83.44, 85.30]} & \best{21.15 [17.60, 25.38]} & \best{52.38 [49.39, 55.18]} & \best{9.77\%} \\
\midrule
\multirow{3}{*}{DailyTalk}
& \multirow{3}{*}{Granite 3 2B}
& base & 3150 & 44.04 [43.13, 44.96] & 2.05 [1.68, 2.49] & 8.23 [7.40, 9.09] & 52.70\% \\
& & +LoRA GRPO & 3150 & 62.62 [61.60, 63.66] & 6.92 [6.09, 7.82] & 19.74 [18.31, 21.28] & 30.86\%\\
& & \best{+LEAF} & 3150 & \best{68.99 [68.11, 69.88]} & \best{11.86 [10.38, 13.51]} & \best{30.44 [28.49, 32.51]} & \best{20.73\%} \\
\midrule
\multirow{3}{*}{LongAudio}
& \multirow{3}{*}{Granite 3 2B}
& base & 1632 & 66.61 [65.21, 67.96] & 1.26 [0.79, 2.13] & 20.65 [17.08, 24.31] & 21.57\%\\
& & +LoRA GRPO & 1632 & 67.11 [65.79, 68.46] & 8.89 [7.15, 11.00] & 26.88 [24.23, 29.82] & 22.61\% \\
& & \best{+LEAF} & 1632 & \best{70.21 [69.14, 71.30]} & \best{16.62 [13.96, 19.47]} & \best{36.31 [33.69, 39.17]} & \best{16.91\%} \\
\midrule
\multirow{8}{*}{CoVoST2}
& \multirow{3}{*}{Granite 3 2B}
& base & 15484 & 45.07 [44.69, 45.47] & 5.81 [5.47, 6.14] & 13.18 [12.80, 13.58] & 57.02\% \\
& & +LoRA GRPO & 15483 & 64.94 [64.47, 65.38] & 10.65 [10.25, 11.10] & 21.53 [20.93, 22.08] & 31.35\% \\
& & \best{+LEAF} & 15484 & \best{69.09 [68.64, 69.54]} & \best{12.27 [11.77, 12.81]} & \best{25.34 [24.62, 26.11]} & \best{25.98\% } \\
\cmidrule(l){2-8}
& \multirow{3}{*}{Granite 3 8B}
& base & 15485 & 49.32 [48.94, 49.72] & 6.42 [6.05, 6.81] & 14.73 [14.31, 15.17] & 49.17\% \\
& & +LoRA GRPO & 15485 & 73.05 [72.64, 73.47] & 14.97 [14.27, 15.67] & 31.64 [30.73, 32.53] & 20.89\% \\
& & \best{+LEAF} & 15485 & \best{75.01 [74.59, 75.43]} & \best{15.08 [13.82, 15.29]} & \best{34.02 [32.96, 35.17]} & \best{18.66\%} \\
\cmidrule(l){2-8}
& \multirow{2}{*}{Granite 4.0 1B}
& base & 15485 & 44.78 [44.42, 45.15] & 7.02 [6.67, 7.40] & 14.28 [13.95, 14.70] & 58.22\%\\
& & \best{+LEAF} & 15485 & \best{68.39 [67.92, 68.83]} & \best{12.24 [11.72, 12.76]} & \best{24.70 [24.04, 25.41]} & \best{26.99\%} \\
\bottomrule
\end{tabular}
\end{table*}

\begin{table*}[t]
\centering\scriptsize
\caption{Pairwise A/B judgement of LEAF vs.\ GRPO with position-swap consistency aggregation. `consist.' is the fraction of order-consistent verdicts; net adv.~$=$ LEAF win-rate $-$ GRPO win-rate on consistent pairs.}
\label{tab:pairwise}
\begin{tabular}{@{}lllrrrrrrr@{}}
\toprule
Judge & Dataset & Backbone & $N$ & win L. & win G. & tie & consist. & net adv. & L. win \\
 & & & pairs & (\%) & (\%) & (\%) & (\%) & (pp) & decided (\%) \\
\midrule
\multirow{4}{*}{JudgeLM}
& \multirow{2}{*}{LibriSQA}
& Granite 3 2B & 2620 & \best{11.4} & 3.8 & 64.8 & 80.0 & \best{+7.6} & \best{75.1} \\
& & Qwen2-Audio & 2620 & \best{7.8} & 3.9 & 70.3 & 82.1 & \best{+3.9} & \best{66.6} \\
& DailyTalk & Granite 3 2B & 3150 & \best{24.0} & 12.6 & 31.6 & 68.1 & \best{+11.4} & \best{65.5} \\
& LongAudio & Granite 3 2B & 1632 & \best{30.5} & 9.3 & 18.1 & 67.8 & \best{+21.3} & \best{76.7} \\
\midrule
\multirow{2}{*}{M-Prom.}
& \multirow{2}{*}{CoVoST2}
& Granite 3 2B & 15485 & \best{39.2} & 21.5 & 0.0 & 60.6 & \best{+17.7} & \best{64.6} \\
& & Granite 3 8B & 15485 & \best{34.4} & 20.4 & 0.0 & 54.8 & \best{+14.0} & \best{62.8} \\
\bottomrule
\end{tabular}
\end{table*}

We organize the results around two comparisons. First, we compare LEAF against the published speech-RL state of the art on LibriSQA, where the evaluation setup is closely reproducible. Second, we compare LEAF against our own LoRA GRPO baseline across all datasets, which controls for prompts, decoding, rollout budget, trainable-parameter budget, and evaluation pipeline.

\subsection{Comparison against the state-of-the-art}
\label{subsec:sota}

Table~\ref{tab:master} includes the published full-parameter GRPO results of \citet{elmakies2025speechgrpo} on LibriSQA. Despite using only LoRA adapters, LEAF improves over their full-parameter GRPO numbers on the matched Granite Speech backbones. On Granite~3.3~2B, LEAF improves over full-parameter GRPO while training only $133.7$M parameters instead of $3.01$B, a $22.5\times$ reduction in trainable parameters. On Granite~3.3~8B, LEAF trains $218.9$M parameters instead of $8.65$B, a $39.5\times$ reduction, while again improving the reported LibriSQA performance. The smaller Granite-4.0-1B-Speech model trained with LEAF uses only $102.2$M trainable parameters and approaches the published full-parameter Granite~3.3~2B GRPO result, further suggesting that improved credit assignment can compensate for a substantially smaller trainable-parameter budget.

We restrict cross-paper comparison to LibriSQA. For CoVoST2, the frozen-base Granite~3.3~2B/8B results reported by \citet{elmakies2025speechgrpo} are substantially higher than the frozen-base performance obtained under our prompts and input formatting. This mismatch suggests that the CoVoST2 setups differ in prompt template or input formatting, not only in the RL algorithm. Since all prompts used in our experiments are the default ones provided in the datasets, we treat our LoRA GRPO implementation as the controlled CoVoST2 baseline and do not include the published CoVoST2 numbers. 

\subsection{Controlled comparison against GRPO}
\label{subsec:controlled-grpo}

Table~\ref{tab:master} gives the controlled comparison between LEAF and LoRA GRPO across all datasets and backbones. In every matched setting, LEAF improves over LoRA GRPO on BLEU and on the judge-based quality scores. The BLEU gains are $+2.80$ and $+1.92$ on CoVoST2 for Granite 3 2B and 8B, $+5.05$ on DailyTalk, $+5.09$ on LibriSQA with Granite 3 2B, $+4.90$ on LibriSQA with Qwen2-Audio, and $+8.68$ on LongAudio. The gains are largest on the long-form QA settings, where assigning one terminal-reward advantage to every token is most likely to dilute span-level signals.

The judge-based metrics show the same trend. On GEMBA-DA, LEAF improves over LoRA GRPO in every matched cell: $+4.15$ and $+1.96$ on CoVoST2, $+6.37$ on DailyTalk, $+1.75$ and $+2.16$ on LibriSQA, and $+3.10$ on LongAudio. LEAF also improves Likert scores in every matched cell. Thus, the gains are not limited to n-gram overlap metrics, but persist under rubric-grounded and continuous LLM-as-judge evaluation.

\noindent\textbf{Tail-risk.} Table~\ref{tab:cvar} shows that LEAF improves worst-case behavior  on top of average quality. On LibriSQA, it raises CVaR@10 from $24.27$ to $32.04$ for Granite~3.3~2B and from $19.06$ to $29.31$ for Qwen2-Audio. On DailyTalk and LongAudio, LEAF improves CVaR@10 by $4.94$ and $7.73$, respectively. On CoVoST2, it improves CVaR@25 and reduces the fraction of examples below $50$ in both Granite settings. This suggests that LEAF reduces catastrophic failures, consistent with its ability to localize negative credit to harmful spans rather than broadcasting it across the full response.

\noindent\textbf{Pairwise preference.}
Table~\ref{tab:pairwise} reports direct A/B comparisons between LEAF and GRPO on matched examples. LEAF has positive net win advantage in every evaluated setting: $+7.6$ points on LibriSQA with Granite 3 2B, $+3.9$ with Qwen2-Audio, $+11.4$ on DailyTalk, and $+21.3$ on LongAudio under JudgeLM. On CoVoST2, M-Prometheus relative grading gives LEAF positive net advantages of $+17.7$ and $+14.0$ points for Granite 3 2B and 8B. These pairwise results confirm that the improvements in judge metrics correspond to direct output-level preferences.

\noindent\textbf{Length efficiency.}
LEAF also produces more compact outputs than LoRA GRPO. In every matched QA setting, LEAF uses fewer predicted tokens: $19.7\!\to\!18.5$ on LibriSQA with Granite~3.3~2B, $20.1\!\to\!18.8$ with Qwen2-Audio, $19.1\!\to\!15.3$ on DailyTalk, and $40.5\!\to\!30.2$ on LongAudio. The effect is strongest for long-form audio, where GRPO tends to over-generate. This supports the intended mechanism of LEAF: by assigning separate credit to tail spans, the method can penalize unnecessary continuations without diluting the signal across the entire response.

\section{Conclusion}
In this work, we introduced LEAF, a retrospective tree-based RL method for SALLM post-training. LEAF reuses the same i.i.d.\ rollout groups collected by GRPO, detects shared prefixes at high-surprisal boundaries, and converts descendant rewards into span-level advantages. This yields process-aware credit assignment without extra decoding, online tree search, or a separate process reward model. Across speech QA and translation tasks, LEAF consistently improves over LoRA GRPO under the same rollout and trainable-parameter budget, while also improving tail behavior, pairwise preferences, and length efficiency. On LibriSQA, LEAF further improves over published full-parameter GRPO while training only LoRA adapters. Future work includes richer reward models, multilingual speech tasks, and adaptive boundary selection.

\section*{Limitations and Societal Impact}
\textbf{Limitations.} We note several limitations. First, LEAF's effectiveness is regime-dependent: its
mechanism requires that i.i.d.\ rollouts share prefixes often enough to populate
non-trivial fork groups, and the absolute fork rate depends on the policy. We
therefore state our claims comparatively (backbone-invariance,
task-monotonicity) and as a lower bound ($r_f \ge 72.9\%$ in every cell) rather
than as an exact rate. Second, our reward signal is corpus BLEU, following prior
speech-RL work; smoother rewards may yield further gains we do not measure.
Third, hardware constrained us to three NVIDIA A40 GPUs (48~GB) with a single training seed per
(model, method): we report decoding-seed standard deviations where available but
do not characterize training-seed variance. Finally, all tasks are English speech understanding, so our findings may not transfer to other languages or modalities.

\noindent\textbf{Societal Impact.} LEAF targets multimodal (speech) models, whose
input processing carries higher compute and energy cost per example than
text-only systems; methods that add rollouts or tree expansion increase this
cost further. As with any work improving speech understanding, downstream
deployment raises questions of reliability and safety, since transcription or
understanding errors can propagate into user-facing decisions, and of
attribution, as it becomes harder to trace whether an output reflects the
source audio faithfully.

\section*{Acknowledgements}
AG gratefully acknowledges Moulik Choraria for introducing him to tree-based methods for post-training large language models.

\bibliography{main}
\appendix

\section{Hardware details}\label{app:train}

All experiments were run using three NVIDIA A40 GPUs (48~GB).

\section{Experimental Details}
\subsection{Training Parameters}
\label{app:training_params}

\begin{table}[t]
\centering
\small
\begin{tabular}{ll}
\toprule
Hyperparameter & Value \\
\midrule
Trainable parameters & LoRA adapters \\
LoRA rank / alpha / dropout & $64$ / $128$ / $0.05$ \\
Optimizer & Adam \\
Learning rate & $5\times 10^{-6}$ \\
KL coefficient & $0.02$ \\
Rollout budget $K$ & $8$ responses per prompt \\
LEAF fork budget $B$ & $2$ \\
Rollout batch size & $6$ \\
Training batch size & $6$ \\
Micro-batch size & $1$ \\
Max prompt length & $256$ tokens \\
Max response length & $200$ tokens \\
Sampling temperature & $1.0$ \\
Top-$p$ & $0.9$ \\
Reward & Sentence BLEU \\
\bottomrule
\end{tabular}
\caption{Shared RL training hyperparameters. Unless otherwise stated, the same settings are used for LEAF and GRPO.}
\label{tab:training_hparams}
\end{table}
\subsection{Dataset Specifics}
To create the LibriSQA and the CoVoST2 datasets we follow the process shared in the original works \cite{wang2020covost2,zhao2023librisqa}. In order to create the DailyTalk dataset, we use the prompts and data given in LongAudio \cite{ghost2025longaudio} and integrate it accordingly with the original data \cite{lee2023dailytalk}, while filtering audio durationns less than or equal to 40 seconds.

\label{app:datasets}
\begin{table}[h]
\centering\small
\caption{Dataset splits used in this work.}
\label{tab:dataset-sizes}
\begin{tabular}{@{}lrrr@{}}
\toprule
Dataset & Train & Validation & Test \\
\midrule
LibriSQA & 85{,}045 & 21{,}196 & 2{,}620 \\
DailyTalk& 21{,}891 & 2{,}806 & 3{,}150 \\
LongAudio & 13{,}374 & 1{,}743 & 1{,}632 \\
CoVoST2 & 288{,}069 & 15{,}471 & 15{,}485 \\
\bottomrule
\end{tabular}
\end{table}

\begin{table}[t]
\centering
\small
\begin{tabular}{lll}
\toprule
Model family & Dataset setting & Epochs \\
\midrule
Granite Speech 3.3 2B / 8B & CoVoST2, LibriSQA & $2$ \\
Granite Speech 3.3 2B  & DailyTalk, LongAudio & $6$ \\
Granite-4.0-1B-Speech & LibriSQA & $2$ \\
Granite-4.0-1B-Speech & CoVoST2 & $1$ \\
Qwen2-Audio & All evaluated datasets & $1$ \\
\bottomrule
\end{tabular}
\caption{Number of RL training epochs. All other RL hyperparameters are shared as in Table~\ref{tab:training_hparams}.}
\label{tab:training_epochs}
\end{table}

\subsection{Prompt templates}
\label{app:prompts}

We list the prompt templates used for each dataset. The literal placeholder
\texttt{<|audio|>} is rewritten at tokenization time to the
backbone-specific audio token: a single audio token for Granite Speech and
\texttt{<|audio\_bos|><|AUDIO|><|audio\_eos|>} for Qwen2-Audio.

\paragraph{LibriSQA.}
We use the dataset-provided question field and wrap it with one of the
following templates:
\begin{itemize}
    \item \texttt{Listen to the audio <|audio|> and answer the following question: \{question\}}
    \item \texttt{<|audio|> Please answer this question about the spoken passage: \{question\}}
    \item \texttt{You will hear speech in <|audio|>. Answer: \{question\}}
\end{itemize}

\paragraph{CoVoST2 English-to-German.}
We use translation-instruction templates such as:
\begin{itemize}
    \item \texttt{<|audio|> Listen to the speech and translate it into German.}
    \item \texttt{Translate the audio <|audio|> from English to German.}
    \item \texttt{You will hear English speech in <|audio|>. Provide the German translation.}
\end{itemize}

\paragraph{DailyTalk and LongAudio.}
For the LongAudio-derived datasets, prompts are inherited from the dataset
preparation pipeline. They follow two structural variants:
\begin{itemize}
    \item \texttt{<|audio|>\textbackslash n\{question\}}
    \item \texttt{\{question\}\textbackslash n<|audio|>}
\end{itemize}
where \texttt{\{question\}} is the dataset-provided open-ended question.

\subsection{LLM-as-judge protocols}
\label{app:judges}

\paragraph{Prediction staging.}
For each dataset, backbone, method, and checkpoint, we generate a single
stochastic prediction file using temperature $0.9$, top-$p=0.9$, and maximum
generation length $200$. All judge protocols consume the same staged
prediction files, so judges are never given access to additional samples.

\paragraph{Grounded Likert-5.}
We use \textsc{Unbabel/M-Prometheus-14B} as a rubric-grounded judge on a
$1$--$5$ Likert scale. The judge receives the question, reference answer,
candidate prediction, and a task-specific rubric. For QA tasks, the rubric
emphasizes correctness, groundedness, completeness, and hallucination. For
CoVoST2, the rubric emphasizes accuracy, fluency, and style. Decoding is
greedy with temperature $0$.

\paragraph{GEMBA-DA $0$--$100$.}
For continuous per-item scoring, we use a GEMBA-style direct-assessment
prompt with \textsc{Qwen2.5-14B-Instruct}. Each item is judged $N=5$ times
at temperature $0.7$, and we use the per-item mean score as the canonical
DA-100 value.

\paragraph{Pairwise judging.}
For pairwise LEAF-vs.-GRPO comparisons, we use
\textsc{BAAI/JudgeLM-13B-v1.0} for QA tasks and M-Prometheus relative
grading for CoVoST2. Each pair is evaluated in both candidate orders. A
verdict is counted only when the two swapped evaluations agree; otherwise,
the pair is marked inconsistent and excluded from the decided-pair win rate.

\section{Forkability: per-cell measurements}
\label{app:fork}

This appendix expands Section~\ref{sec:why} with (i) the per-cell
breakdown of the usable fork-fire rate, (ii) the auxiliary
fork-group statistics that justify treating fires as
discriminating branches rather than near-clones, (iii) the
cumulative-position depth quantiles, and (iv) a backbone-organised
view of the per-position curves.

\subsection{Per-cell usable fork-fire rate}

Table~\ref{tab:fork-percell-app} gives the $12$-cell breakdown that
underlies the pooled rows of Tab.~\ref{tab:fork-headline}. The minimum
cell rate is $72.9\%$ (Granite-3.3-8B on LongAudio) and the
maximum is $94.6\%$ (Granite-4.0-1B on LibriSQA). Within every task
the three backbone CIs overlap; between tasks, the
LibriSQA / DailyTalk / CoVoST2 / LongAudio ordering recovers
the task-monotonicity claim.

\begin{table}[h]\centering\tiny
\setlength{\tabcolsep}{4pt}
\begin{tabular}{llrrrr}\toprule
Backbone & Task & $N$ & Selected & Usable & $r_f$ (\%, 95\% CI) \\\midrule
G.\,3.3-2B & LibriSQA            & 3400 & 6800 & 6222 & 91.5 \,(86.9--94.1) \\
G.\,3.3-2B & CoVoST2        & 3400 & 6799 & 5364 & 78.9 \,(70.5--85.4) \\
G.\,3.3-2B & DailyTalk & 3400 & 6800 & 6364 & 93.6 \,(86.9--96.6) \\
G.\,3.3-2B & LongAudio    & 3000 & 6000 & 4395 & 73.3 \,(66.3--79.3) \\
G.\,3.3-8B & LibriSQA            & 3400 & 6799 & 6352 & 93.4 \,(89.4--95.8) \\
G.\,3.3-8B & CoVoST2        & 3000 & 6000 & 5055 & 84.3 \,(76.1--89.8) \\
G.\,3.3-8B & DailyTalk & 3000 & 6000 & 5273 & 87.9 \,(81.4--93.1) \\
G.\,3.3-8B & LongAudio    & 3332 & 6664 & 4857 & 72.9 \,(66.3--78.6) \\
G.\,4.0-1B & LibriSQA            & 3400 & 6799 & 6429 & 94.6 \,(90.9--96.6) \\
G.\,4.0-1B & CoVoST2        & 3400 & 6790 & 5472 & 80.6 \,(69.7--87.3) \\
G.\,4.0-1B & DailyTalk & 3400 & 6800 & 6316 & 92.9 \,(87.8--96.1) \\
G.\,4.0-1B & LongAudio    & 3332 & 6664 & 5042 & 75.7 \,(68.8--81.7) \\
\bottomrule\end{tabular}
\caption{Per-cell usable fork-fire rate $r_f$ with $95\%$ per-position
bootstrap CIs ($n_{\text{boot}}\!=\!5000$). $N$ is the number of
validation+test examples measured per cell.}
\label{tab:fork-percell-app}
\end{table}

\subsection{Fork-group statistics}

Table~\ref{tab:fork-aux-app} reports the mean number of rollouts
sharing each fired prefix (``Group size'', maximum $K\!=\!8$), the
mean within-group BLEU spread (``Reward spread''), the usable-count
weighted mean position of fired forks (``MU pos''), and the number of
distinct token positions hosting at least one usable fork (``\# pos'').
Group sizes range $4.0\text{--}6.2$ across cells and reward spread is
strictly positive in every cell ($\ge 0.027$), confirming that fired
forks discriminate higher- from lower-reward branches rather than
collapsing to duplicates.

\begin{table}[h]\centering\tiny
\setlength{\tabcolsep}{4pt}
\begin{tabular}{llrrrr}\toprule
Backbone & Task & Group size & Reward spread & MU pos & \#\,pos \\\midrule
G.\,3.3-2B & LibriSQA            & 5.40 & 0.062 & 12.4 & 99 \\
G.\,3.3-2B & CoVoST2        & 4.05 & 0.044 & 11.1 & 97 \\
G.\,3.3-2B & DailyTalk & 6.18 & 0.030 & ~9.1 & 70 \\
G.\,3.3-2B & LongAudio    & 4.31 & 0.036 & 18.5 & 78 \\
G.\,3.3-8B & LibriSQA            & 5.75 & 0.053 & 13.5 & 99 \\
G.\,3.3-8B & CoVoST2        & 4.19 & 0.050 & 10.0 & 93 \\
G.\,3.3-8B & DailyTalk & 4.77 & 0.040 & ~9.2 & 35 \\
G.\,3.3-8B & LongAudio    & 4.25 & 0.034 & 20.2 & 78 \\
G.\,4.0-1B & LibriSQA            & 5.48 & 0.059 & ~9.0 & 78 \\
G.\,4.0-1B & CoVoST2        & 4.73 & 0.057 & ~6.9 & 72 \\
G.\,4.0-1B & DailyTalk & 5.29 & 0.027 & ~8.2 & 53 \\
G.\,4.0-1B & LongAudio    & 3.96 & 0.030 & 17.0 & 77 \\
\bottomrule\end{tabular}
\caption{Auxiliary per-cell fork-group statistics.}
\label{tab:fork-aux-app}
\end{table}

\subsection{Depth profile of usable fork-fires}

Table~\ref{tab:fork-decay-app} reports cumulative-position quantiles:
$P_q$ is the token position by which $q\%$ of the cell's usable
fork-fires have already occurred. The depth profile clusters tightly
across backbones within a task --- the forkable window is governed by
the task's input--output coupling, not by the model. Short-output QA
tasks concentrate fork-fires within the first $\approx\!18$ tokens
(DailyTalk: $P_{90}\le 18$ across all three backbones),
while long-form LongAudio maintains fork activity past position
$35$ ($P_{90}=35\text{--}39$, $P_{99}=55\text{--}59$).

\begin{table}[h]\centering\tiny
\setlength{\tabcolsep}{4pt}
\begin{tabular}{llrrrrr}\toprule
Backbone & Task & $P_{50}$ & $P_{90}$ & $P_{99}$ & Last & \# pos \\\midrule
G.\,3.3-2B & LibriSQA            &  8 & 28 & 84 & 99 & 99 \\
G.\,3.3-2B & CoVoST2        &  7 & 22 & 83 & 99 & 97 \\
G.\,3.3-2B & DailyTalk &  8 & 18 & 32 & 99 & 70 \\
G.\,3.3-2B & LongAudio    & 17 & 35 & 55 & 99 & 78 \\
G.\,3.3-8B & LibriSQA            &  7 & 33 & 90 & 99 & 99 \\
G.\,3.3-8B & CoVoST2        &  6 & 21 & 76 & 99 & 93 \\
G.\,3.3-8B & DailyTalk &  9 & 17 & 23 & 53 & 35 \\
G.\,3.3-8B & LongAudio    & 18 & 39 & 59 & 99 & 78 \\
G.\,4.0-1B & LibriSQA            &  6 & 20 & 41 & 99 & 78 \\
G.\,4.0-1B & CoVoST2        &  5 & 14 & 38 & 99 & 72 \\
G.\,4.0-1B & DailyTalk &  8 & 14 & 18 & 94 & 53 \\
G.\,4.0-1B & LongAudio    & 15 & 37 & 57 & 97 & 77 \\
\bottomrule\end{tabular}
\caption{Cumulative-position quantiles of usable fork-fires per cell.
The depth profile is essentially constant across backbones within a
task.}
\label{tab:fork-decay-app}
\end{table}

\subsection{Per-backbone view of the per-position curves}

Figure~\ref{fig:fork-fire-position-app} replots
Fig.~\ref{fig:fork-fire-position} with the three backbones placed in
separate panels and the four tasks overlaid in each, making the
between-task separation visually direct. The four task curves order
$\text{LibriSQA}>\text{DailyTalk}>\text{CoVoST2}>
\text{LongAudio}$ at every position beyond the startup region,
on every backbone --- the visual analogue of the pooled-by-task ordering
in Tab.~\ref{tab:fork-headline}.

\begin{figure*}[h]\centering
\includegraphics[width=\linewidth]{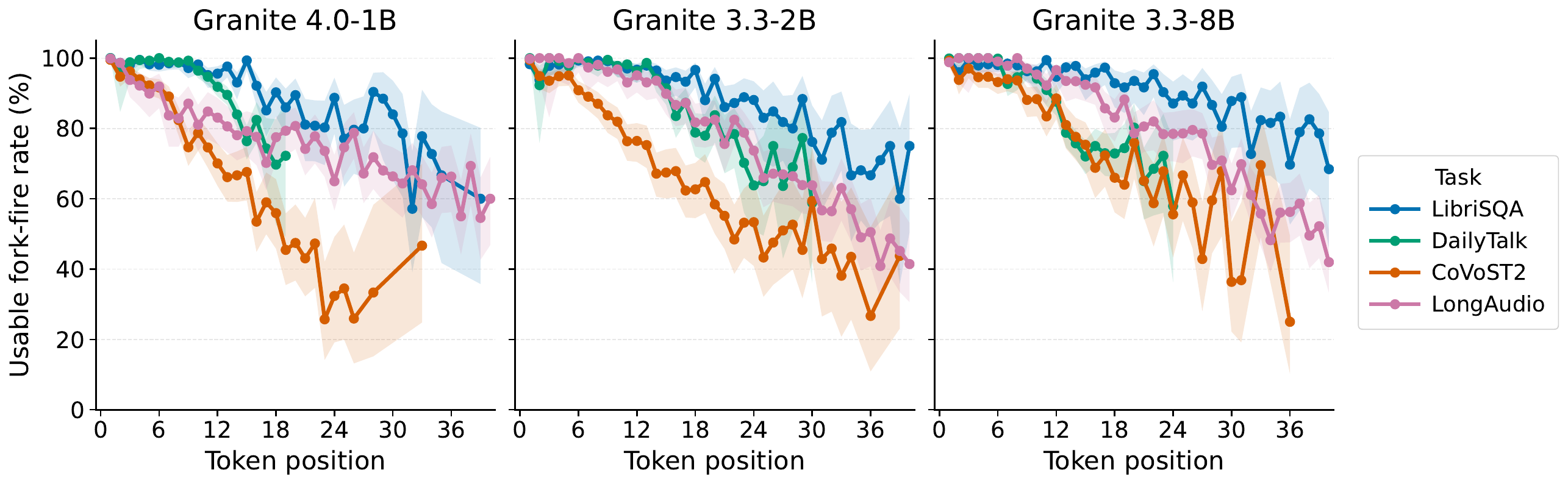}
\caption{Per-position usable fork-fire rate, one panel per backbone,
four tasks overlaid; shaded bands are pointwise $95\%$ Wilson CIs.}
\label{fig:fork-fire-position-app}
\end{figure*}

\section{Additional Experimental Results}
\label{app:additional_exps}
\subsection{Length Experiments}
\begin{table}[h]
\centering\scriptsize
\caption{Length efficiency: mean GPT-2 BPE token counts of the prediction (post-hoc, universal tokenizer; same for every row). $\Delta_{\text{base}}=$ LEAF$-$base; $\Delta_{\text{GRPO}}=$ LEAF$-$GRPO. Negative $=$ LEAF produces shorter outputs than the comparator. `--' indicates the corresponding checkpoint was not trained / not staged.}
\label{tab:length}
\begin{tabular}{@{}llrrr@{}}
\toprule
Dataset & Backbone & base & GRPO & LEAF \\
\midrule
CoVoST2 & Granite 3 2B & 15.5 & 22.1 & 21.5  \\
CoVoST2 & Granite 3 8B & 17.1 & 22.3 & 21.9  \\
\midrule
DailyTalk & Granite 3 2B & 71.8 & 19.1 & 15.3 \\
\midrule
LibriSQA & Granite 3 2B & 22.3 & 19.7 & 18.5  \\
LibriSQA & Qwen2-Audio & 45.0 & 20.1 & 18.8  \\
\midrule
LongAudio & Granite 3 2B & 70.1 & 40.5 & 30.2 \\
\bottomrule
\end{tabular}
\end{table}

\section{Theoretical Proofs}
\label{app:theory-proofs}

This appendix gives the formal statements and proofs for the theoretical claims in the main text. Throughout, we fix a prompt $x$. Let $Y\sim\pi_\theta(\cdot\mid x)$ be a random completion and let $R=r(x,Y)$ be its terminal reward. The observed rewards $r_i$ are realizations of $R$ for sampled completions $y^{(i)}$. We assume $R\in[0,R_{\max}]$ when using concentration inequalities.

\begin{assumption}[Fixed-prefix conditional sampling]
\label{ass:fixed-prefix-sampling}
Fix a boundary $p$ and a prefix $u\in\mathcal{V}^p$. Conditional on $n(p,u)=m$, the rewards of the sampled descendants $\{r_i:y_{<p}^{(i)}=u\}$ are conditionally independent samples from the distribution of $R$ given $Y_{<p}=u$. We write $V(u)=\mathbb{E}[R\mid Y_{<p}=u]$.
\end{assumption}

\begin{assumption}[Selected-node concentration]
\label{ass:selected-node-concentration}
For each retained node $v$, let $V_{\mathrm{sel}}(v)$ be its selection-conditional value. We assume that for all $\epsilon>0$,
\[
\mathbb{P}\left(|\widehat V(v)-V_{\mathrm{sel}}(v)|\ge\epsilon\right)
\le
2\exp\left(-\frac{2n(v)\epsilon^2}{R_{\max}^2}\right).
\]
\end{assumption}

\begin{remark}[Ordinary and selection-conditional prefix values]
Assumption~\ref{ass:fixed-prefix-sampling} is the clean condition for a fixed prefix. Since LEAF selects nodes retrospectively, a selected node can also carry information from the selection event. Assumption~\ref{ass:selected-node-concentration} therefore states concentration around $V_{\mathrm{sel}}(v)$, the value induced by the selected node. If selection does not distort the descendant reward law beyond conditioning on the retained prefix $u$, then $V_{\mathrm{sel}}(v)=V(u)$.
\end{remark}

\subsection{Proof of the span-advantage claim}

\begin{theorem}[Validity of LEAF span advantages]
\label{thm:leaf-advantage}
For a fixed prompt $x$, let $Y\sim\pi_\theta(\cdot\mid x)$, $R=r(x,Y)$, and $V(u)=\mathbb{E}[R\mid Y_{<|u|}=u]$. Consider a path of prefixes $U_0\prec U_1\prec\cdots\prec U_q$, with $U_0=\varnothing$, and let $Z_j=\sum_{t=\tau_{j-1}}^{\tau_j-1}\nabla_\theta\log\pi_\theta(A_t\mid S_t)$ be the score of span $j$. Define $\mathrm{GA}(U_j)=V(U_j)-V(U_0)$ and $\mathrm{LA}(U_j)=V(U_j)-V(U_{j-1})$. Then $\mathbb{E}[\mathrm{GA}(U_j)Z_j]=\mathbb{E}[\mathrm{LA}(U_j)Z_j]=\mathbb{E}[RZ_j]$. Moreover, if a retained node $v=(p,u)$ has $n(v)=m$ descendants, then under Assumption~\ref{ass:fixed-prefix-sampling}, $\mathbb{E}[\widehat V(v)\mid n(v)=m]=V(u)$ and $\operatorname{Var}(\widehat V(v)\mid n(v)=m)=\operatorname{Var}(R\mid Y_{<p}=u)/m$.
\end{theorem}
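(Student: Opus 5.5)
The plan is to reduce everything to two standard facts: the tower property of conditional expectation, and the zero-mean property of the score function. Throughout I treat the boundaries $\tau_0=0<\tau_1<\cdots<\tau_q$ as fixed positions (or, more generally, stopping times of the token filtration), so that $U_j=Y_{<\tau_j}$.

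Let $\mathcal F_j=\sigma(Y_{<\tau_j})=\sigma(U_j)$. The first observation is that $Z_j$ is $\mathcal F_j$-measurable, because it is a function only of the tokens $A_t$ and states $S_t$ with $t<\tau_j$. The second is that $\mathbb E[Z_j\mid\mathcal F_{j-1}]=0$. To see this, peel off one token at a time from the top: for each $t\in[\tau_{j-1},\tau_j)$,
\[
\mathbb E[\nabla_\theta\log\pi_\theta(A_t\mid S_t)\mid S_t]=\sum_a\nabla_\theta\pi_\theta(a\mid S_t)=\nabla_\theta 1=0 .
\]
Since $\mathcal F_{j-1}\subseteq\sigma(S_t)$ for these $t$, the tower property gives $\mathbb E[\nabla_\theta\log\pi_\theta(A_t\mid S_t)\mid\mathcal F_{j-1}]=0$, and summing over $t$ yields the claim. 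For stopping-time boundaries, the same identity follows from optional stopping applied to the martingale of partial score sums. I expect this to be the only delicate step: the statement leaves implicit that spans are determined causally, and I would state that requirement explicitly as the hypothesis under which the result holds.

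With these two facts, the three expectations follow directly.
\begin{itemize}
\item Since $Z_j$ is $\mathcal F_j$-measurable, $\mathbb E[RZ_j]=\mathbb E\big[\mathbb E[R\mid\mathcal F_j]Z_j\big]=\mathbb E[V(U_j)Z_j]$.
\item $V(U_0)=\mathbb E[R]$ is a constant and $\mathbb E[Z_j]=0$, so $\mathbb E[V(U_0)Z_j]=0$. Hence $\mathbb E[\mathrm{GA}(U_j)Z_j]=\mathbb E[RZ_j]$.
\item $V(U_{j-1})$ is $\mathcal F_{j-1}$-measurable, so $\mathbb E[V(U_{j-1})Z_j]=\mathbb E\big[V(U_{j-1})\,\mathbb E[Z_j\mid\mathcal F_{j-1}]\big]=0$. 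Hence $\mathbb E[\mathrm{LA}(U_j)Z_j]=\mathbb E[RZ_j]$.
\end{itemize}
Integrability is routine, since $R$ is bounded and the scores are assumed square-integrable.

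For the estimator, condition on $n(v)=m$. By Assumption~\ref{ass:fixed-prefix-sampling}, $\widehat V(v)$ is the mean of $m$ conditionally independent draws from the law of $R$ given $Y_{<p}=u$. Linearity of expectation then gives conditional mean $V(u)$. Independence gives that the variance of the sum is $m\operatorname{Var}(R\mid Y_{<p}=u)$, and dividing by $m^2$ yields $\operatorname{Var}(R\mid Y_{<p}=u)/m$.
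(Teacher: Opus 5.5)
Your proposal is correct and follows essentially the same route as the paper: the tower property replaces $R$ by $V(U_j)$ because the span score is $U_j$-measurable; the zero-mean score identity eliminates the baselines $V(U_0)$ and $V(U_{j-1})$ (the paper applies it token by token with an $S_t$-measurable baseline and sums, while you condition the whole span score on $\mathcal F_{j-1}$, which is the same argument); and the estimator part is the standard mean and variance of an average of $m$ conditionally i.i.d.\ draws. Your explicit requirement that boundaries be fixed or causally determined (stopping times) is a worthwhile addition, since the paper's claim that $V(U_{j-1})$ is $S_t$-measurable silently relies on it, and LEAF's retrospectively selected boundaries do not automatically satisfy it.
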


\begin{proof}
Fix a span $j$ and a token $t\in[\tau_{j-1},\tau_j)$. Since $U_j=Y_{<\tau_j}$ contains $A_t$ and its prefix state $S_t$, the score term is measurable with respect to $U_j$. By the tower property,
\begin{align}
&\mathbb{E}\left[R\nabla_\theta\log\pi_\theta(A_t\mid S_t)\right] \notag \\
&\quad =
\mathbb{E}\left[V(U_j)\nabla_\theta\log\pi_\theta(A_t\mid S_t)\right].
\label{eq:span-tower-proof}
\end{align}
For any random variable $B$ that is measurable with respect to $S_t$,
\begin{align}
&\mathbb{E}\left[B\nabla_\theta\log\pi_\theta(A_t\mid S_t)\right] \notag \\
&\quad =
\mathbb{E}\left[B\,\mathbb{E}\left[\nabla_\theta\log\pi_\theta(A_t\mid S_t)\mid S_t\right]\right]
=0,
\label{eq:baseline-zero-proof}
\end{align}
because $\mathbb{E}[\nabla_\theta\log\pi_\theta(A_t\mid S_t)\mid S_t]=0$. Taking $B=V(U_0)$ in \eqref{eq:baseline-zero-proof} and combining with \eqref{eq:span-tower-proof} gives the identity for $\mathrm{GA}(U_j)$. Taking $B=V(U_{j-1})$, which is measurable with respect to $S_t$ for every $t\in[\tau_{j-1},\tau_j)$, gives the identity for $\mathrm{LA}(U_j)$. Summing over $t=\tau_{j-1},\ldots,\tau_j-1$ proves $\mathbb{E}[\mathrm{GA}(U_j)Z_j]=\mathbb{E}[\mathrm{LA}(U_j)Z_j]=\mathbb{E}[RZ_j]$.

For the empirical claim, condition on $n(v)=m$ for a fixed prefix node $v=(p,u)$. By Assumption~\ref{ass:fixed-prefix-sampling}, the descendant rewards are conditionally independent samples from $R\mid Y_{<p}=u$. Their mean is $V(u)$ and their variance is $\operatorname{Var}(R\mid Y_{<p}=u)$. Since $\widehat V(v)=m^{-1}\sum_{i\in I(v)}r_i$, the conditional mean is $V(u)$ and the conditional variance is $\operatorname{Var}(R\mid Y_{<p}=u)/m$.
\end{proof}

\subsection{Proof of the fork-budget claim}

\begin{theorem}[Fork-budget support-resolution tradeoff]
\label{thm:fork-budget}
Fix rollout budget $K$, fork budget $B\ge1$, and retention threshold $m_0\ge2$, with $K\ge m_0$. Let $\mathcal{N}_{\mathrm{fork}}^{(m_0)}$ be the retained non-root prefix nodes obtained by applying at most $B$ selected boundaries to the $K$ sampled responses and keeping only prefix groups with at least $m_0$ descendants. Then $|\mathcal{N}_{\mathrm{fork}}^{(m_0)}|\le B\lfloor K/m_0\rfloor$. For a selected retained node $v=(p,u)$, define its selection-conditional value as $V_{\mathrm{sel}}(v)=\mathbb{E}[R_J\mid v\in\mathcal{N}_{\mathrm{fork}}^{(m_0)}]$, where $J$ is drawn uniformly from the descendant set $I(v)$. Under Assumption~\ref{ass:selected-node-concentration}, if $n_{\min}=\min_{v\in\mathcal{N}_{\mathrm{fork}}^{(m_0)}}n(v)$, then with probability at least $1-\delta$, $\max_{v\in\mathcal{N}_{\mathrm{fork}}^{(m_0)}}|\widehat V(v)-V_{\mathrm{sel}}(v)|\le R_{\max}\sqrt{\log(2B\lfloor K/m_0\rfloor/\delta)/(2n_{\min})}$. If selection does not distort the descendant reward law beyond conditioning on the retained prefix, then $V_{\mathrm{sel}}(v)=V(u)$.
\end{theorem}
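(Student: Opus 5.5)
The statement has three parts: a combinatorial counting bound, a uniform concentration bound, and an identification of $V_{\mathrm{sel}}(v)$. I will prove them in that order.

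\textbf{Counting.} Fix one selected boundary $p$. The prefix groups $\{I(p,u)\}_u$ partition $[K]$, since each response has exactly one length-$p$ prefix. Suppose $N_p$ of these groups are retained, each with at least $m_0$ members. Disjointness then gives $N_p m_0 \le K$. Because $N_p$ is an integer, $N_p\le\lfloor K/m_0\rfloor$. At most $B$ boundaries are selected, so summing over them gives $|\mathcal N_{\mathrm{fork}}^{(m_0)}|\le B\lfloor K/m_0\rfloor$. Nodes at different boundaries are distinct pairs $(p,u)$, so no double-counting issue arises; the sum is simply an upper bound. Setting $m_0=2$ recovers the informal $B\lfloor K/2\rfloor$.

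\textbf{Concentration.} I will use Assumption~\ref{ass:selected-node-concentration} together with a union bound over retained nodes. The subtlety is that the set of retained nodes is random.

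To handle this, I will index a fixed collection of $M:=B\lfloor K/m_0\rfloor$ slots: for each of the at most $B$ boundaries in sorted order, and each of at most $\lfloor K/m_0\rfloor$ retained groups ordered, say, by smallest descendant index. Each retained node occupies exactly one slot. I then apply the assumption slotwise. For a retained $v$, since $n(v)\ge n_{\min}$, the tail bound is at most $2\exp(-2n_{\min}\epsilon^2/R_{\max}^2)$. A union bound over the $M$ slots gives failure probability at most $2M\exp(-2n_{\min}\epsilon^2/R_{\max}^2)$. Setting this equal to $\delta$ and solving yields $\epsilon=R_{\max}\sqrt{\log(2M/\delta)/(2n_{\min})}$.

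The main obstacle is that $n_{\min}$ is itself random. I will treat the assumption as holding conditionally on the realized retained structure, that is, on the counts $n(v)$. This is consistent with how $V_{\mathrm{sel}}$ is defined as a selection-conditional quantity. Under that reading, $n_{\min}$ is fixed inside the conditional probability, and the bound follows by taking expectation over the structure. I will state this interpretation explicitly rather than attempt a peeling argument over possible values of $n_{\min}$, which would only cost an extra $\log K$ factor.

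\textbf{Identification of $V_{\mathrm{sel}}$.} Suppose that, conditional on retention, the descendants of $v$ have rewards distributed as $R\mid Y_{<p}=u$. Then for $J$ uniform on $I(v)$, the tower property (averaging over $J$) gives $\mathbb E[R_J\mid v\text{ retained}]=\mathbb E[R\mid Y_{<p}=u]=V(u)$. This is the same computation as the unbiasedness part of Theorem~\ref{thm:leaf-advantage}, carried out under Assumption~\ref{ass:fixed-prefix-sampling}.
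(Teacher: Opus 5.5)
Your proof is correct and follows the paper's argument step for step: the per-boundary partition count gives at most $\lfloor K/m_0\rfloor$ retained groups, Assumption~\ref{ass:selected-node-concentration} is union-bounded over at most $B\lfloor K/m_0\rfloor$ nodes using $n(v)\ge n_{\min}$, and $V_{\mathrm{sel}}(v)=V(u)$ follows directly under non-distortion. Your fixed-slot indexing and your explicit reading of the assumption as conditional on the realized retained structure (so that $n_{\min}$ is fixed inside the probability) refine points the paper leaves implicit; they do not change the route.
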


\begin{proof}
Fix a selected boundary $p$. Exact prefix matching partitions the $K$ sampled responses into disjoint groups. Since each retained group has size at least $m_0$, there are at most $\lfloor K/m_0\rfloor$ retained groups at boundary $p$. Since at most $B$ boundaries are selected,
\[
|\mathcal{N}_{\mathrm{fork}}^{(m_0)}|
\le
B\left\lfloor\frac{K}{m_0}\right\rfloor .
\]

If $\mathcal{N}_{\mathrm{fork}}^{(m_0)}=\varnothing$, the high-probability statement is vacuous. Otherwise, let $n_{\min}=\min_v n(v)$ over retained nodes. By Assumption~\ref{ass:selected-node-concentration}, for any retained node $v$,
\[
\mathbb{P}\left(|\widehat V(v)-V_{\mathrm{sel}}(v)|\ge\epsilon\right)
\le
2\exp\left(-\frac{2n(v)\epsilon^2}{R_{\max}^2}\right).
\]
Since $n(v)\ge n_{\min}$, a union bound gives
\begin{align}
&\mathbb{P}\left(\max_v |\widehat V(v)-V_{\mathrm{sel}}(v)|\ge\epsilon\right) \notag \\
&\quad \le
2|\mathcal{N}_{\mathrm{fork}}^{(m_0)}|
\exp\left(-\frac{2n_{\min}\epsilon^2}{R_{\max}^2}\right) \notag \\
&\quad \le
2B\left\lfloor\frac{K}{m_0}\right\rfloor
\exp\left(-\frac{2n_{\min}\epsilon^2}{R_{\max}^2}\right).
\end{align}
Setting the last expression equal to $\delta$ and solving for $\epsilon$ proves the stated bound. Finally, if selection does not distort the descendant reward law beyond the retained prefix $u$, then $V_{\mathrm{sel}}(v)=\mathbb{E}[R\mid Y_{<p}=u]=V(u)$.
\end{proof}

\subsection{Proof of the prefix-matching claim}

\begin{theorem}[Collision-only prefix matching is shallow-biased]
\label{thm:prefix-matching}
Let $\mathcal{A}=\{p_1<\cdots<p_M\}$ be the admissible boundaries. For $p\in\mathcal{A}$, define $U_p=Y_{<p}$, $\kappa(p)=\sum_{u\in\mathcal{V}^p}\mathbb{P}(U_p=u)^2$, and $\eta(p)=\operatorname{Var}(\mathbb{E}[R\mid U_p])/\operatorname{Var}(R)$ when $\operatorname{Var}(R)>0$. Then $\kappa(p)$ is non-increasing in $p$. Hence any collision-only selector $p_{\mathrm{coll}}\in\arg\max_{p\in\mathcal{A}}\kappa(p)$ selects from the earliest plateau $\{p\in\mathcal{A}:\kappa(p)=\kappa(p_1)\}$; if $\kappa(p_1)>\kappa(p_2)$, it uniquely selects $p_1$. Moreover, there exist $p_0<p_\star$ such that $\kappa(p_0)>\kappa(p_\star)>0$, while $\eta(p_0)=0$ and $\eta(p_\star)=1$.
\end{theorem}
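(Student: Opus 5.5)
The proof splits into three parts: a monotonicity claim, a consequence for any collision-only selector, and an explicit example.

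\textbf{Step 1: monotonicity of $\kappa$.} For $p<p'$, the prefix $U_p$ is a deterministic function of $U_{p'}$ (truncation). For each $u\in\mathcal V^p$,
\[
\mathbb P(U_p=u)=\sum_{w\in\mathcal V^{p'}:\,w_{<p}=u}\mathbb P(U_{p'}=w).
\]
Since all terms are nonnegative, $(\sum_w a_w)^2\ge\sum_w a_w^2$. Summing over $u$ gives $\kappa(p)\ge\kappa(p')$. Equivalently, $\kappa(p)$ is the probability that two i.i.d.\ completions share their length-$p$ prefix, and that event contains the corresponding event at $p'$.

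I will take a moment to make $U_p$ well defined when completions are shorter than $p$. Admissible boundaries are at most $\ell_{\min}$; alternatively, treat a terminated sequence as padded with an end symbol. Either convention keeps the truncation map intact.

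\textbf{Step 2: the selector claim.} With $\kappa$ non-increasing on $\mathcal A$, the maximum value is $\kappa(p_1)$. So $\arg\max_{p\in\mathcal A}\kappa(p)=\{p\in\mathcal A:\kappa(p)=\kappa(p_1)\}$, and monotonicity makes this an initial segment of $\mathcal A$. If $\kappa(p_1)>\kappa(p_2)$, then every later boundary has $\kappa\le\kappa(p_2)<\kappa(p_1)$, so $p_1$ is the unique maximizer.

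\textbf{Step 3: the existence construction.} This is the only step requiring a choice; I expect no real obstacle beyond making the example satisfy $\kappa(p_\star)>0$ and $\operatorname{Var}(R)>0$. I will build a toy policy, with $\mathcal A$ containing $p_0=1$ and $p_\star=2$:
\begin{itemize}
\item The first token is a deterministic boilerplate token $b$.
\item The second token is uniform on $\{c,d\}$, independent of everything before it.
\item The reward is $R=\mathbf 1\{a_1=c\}$, and the remaining tokens are deterministic.
\end{itemize}
Then $U_1=(b)$ almost surely, so $\kappa(1)=1$. At depth 2, $U_2$ is uniform on two values, so $\kappa(2)=1/2>0$. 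Since $\mathbb E[R\mid U_1]=1/2$ is constant, $\eta(1)=0$. Since $R$ is a function of $U_2$, $\mathbb E[R\mid U_2]=R$ and $\eta(2)=1$, with $\operatorname{Var}(R)=1/4>0$.

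Any policy class rich enough to contain this distribution suffices. If the theorem is meant for a general vocabulary and horizon, the same construction embeds at arbitrary positions $p_0<p_\star$: make tokens before $p_\star-1$ deterministic and the token at position $p_\star-1$ a fair binary choice determining $R$. This gives $\kappa(p_0)=1>\kappa(p_\star)=1/2$ with the same $\eta$ values.
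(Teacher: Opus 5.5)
Your proof is correct and follows essentially the same route as the paper. Both use the inequality $\sum_a q_a^2\le(\sum_a q_a)^2$ over the refinement of the prefix partition (you apply it directly for $p<p'$, the paper does one step plus induction), the earliest-plateau argument for the selector, and the same deterministic-prefix, fair-binary-fork example giving $\kappa(p_0)=1$, $\kappa(p_\star)=1/2$, $\eta(p_0)=0$, $\eta(p_\star)=1$; your collision-event reading of $\kappa$ and the padding convention for short completions are harmless additions the paper leaves implicit.
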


\begin{proof}
We first prove that $\kappa(p)$ is non-increasing. The prefix partition at depth $p+1$ refines the prefix partition at depth $p$. Fix a length-$p$ prefix $u$, and let $q_a=\mathbb{P}(U_{p+1}=(u,a))$. Then $\mathbb{P}(U_p=u)=\sum_a q_a$, and
\[
\sum_a q_a^2
\le
\left(\sum_a q_a\right)^2
=
\mathbb{P}(U_p=u)^2 .
\]
Summing over all $u\in\mathcal{V}^p$ gives $\kappa(p+1)\le\kappa(p)$. Thus, by induction, $p_a<p_b$ implies $\kappa(p_a)\ge\kappa(p_b)$.

It follows that any maximizer of $\kappa(p)$ over $\mathcal{A}=\{p_1<\cdots<p_M\}$ lies in the earliest plateau $\{p\in\mathcal{A}:\kappa(p)=\kappa(p_1)\}$. If $\kappa(p_1)>\kappa(p_2)$, then no deeper admissible boundary can match $\kappa(p_1)$, so $p_1$ is the unique maximizer.

It remains to show that high collision need not imply reward information. Consider two admissible boundaries $p_0<p_\star$. Suppose all responses share the same prefix at $p_0$. Then $U_{p_0}$ is deterministic, so $\kappa(p_0)=1$. Since $\mathbb{E}[R\mid U_{p_0}]$ is constant, $\eta(p_0)=0$.

At boundary $p_\star$, suppose there are two prefixes $u_0,u_1$, each with probability $1/2$, and let $R=\mathbf{1}\{U_{p_\star}=u_1\}$. Then $\kappa(p_\star)=(1/2)^2+(1/2)^2=1/2$, so $\kappa(p_0)>\kappa(p_\star)>0$. Also, $\mathbb{E}[R\mid U_{p_\star}=u_0]=0$ and $\mathbb{E}[R\mid U_{p_\star}=u_1]=1$, so $\mathbb{E}[R\mid U_{p_\star}]=R$. Hence $\operatorname{Var}(\mathbb{E}[R\mid U_{p_\star}])=\operatorname{Var}(R)$, and $\eta(p_\star)=1$. Therefore collision-only selection can prefer a shallower boundary with larger collision but zero reward informativeness.
\end{proof}

\section{AI Usage}
AI-based assistants were used for limited editing, wording suggestions, and debugging assistance during experiment development and manuscript preparation. All technical content, experimental design, analysis, and conclusions were verified and finalized by the authors.

\end{document}